\def\eqref#1{equation~\ref{#1}}
\def\1{\bm{1}}
\DeclareMathAlphabet{\mathsfit}{\encodingdefault}{\sfdefault}{m}{sl}
\SetMathAlphabet{\mathsfit}{bold}{\encodingdefault}{\sfdefault}{bx}{n}
\DeclareMathOperator*{\argmin}{arg\,min}
\newcommand{\flt}{\texttt{FLT}}
\newcommand{\fltours}{\texttt{FLT}\,(ours)}
\newcommand{\cfl}{\texttt{CFL}}
\newcommand{\fedavg}{\texttt{FedAvg}}
\newcommand{\ifca}{\texttt{IFCA}}
\newcommand{\fedsem}{\texttt{FedSEM}}
\newcommand{\fedprox}{\texttt{FedProx}}
\date{}
\newtheorem{theorem}{Theorem}
\newtheorem{assumption}{Assumption}
\newtheorem{lemma}{Lemma}
\newcolumntype{C}[1]{>{\centering\let\newline\\\arraybackslash\hspace{0pt}}m{#1}}
\definecolor{LightCyan}{rgb}{0.8,1,1}
\newcommand{\ccell}[3][]{%
	\kern-\fboxsep
	\if\relax\detokenize{#1}\relax
	\expandafter\@firstoftwo
	\else
	\expandafter\@secondoftwo
	\fi
	{\colorbox{#2}}%
	{\colorbox[#1]{#2}}%
	{#3}\kern-\fboxsep
}
\definecolor{cellgray}{gray}{0.8}
\newcommand{\overbar}[1]{\mkern 1.5mu\overline{\mkern-5.5mu#1\mkern-1.5mu}\mkern 1.5mu}
\title{Federated Learning with Taskonomy for Non-IID Data}
\author{Hadi Jamali-Rad\textsuperscript{\rm 1, 3} \quad~Mohammad Abdizadeh\textsuperscript{\rm 2}\quad~Anuj Singh\textsuperscript{\rm 3}\\{\textsuperscript{\rm 1}Shell Global Solutions International B.V., Amsterdam, The Netherlands}\\{\textsuperscript{\rm 2}Myant Inc., Toronto, ON, Canada}\\{\textsuperscript{\rm 3}Delft University of Technology (TU Delft), Delft, The Netherlands} \\ \small{\texttt{hadi.jamali-rad@shell.com}},~
\small{\texttt{mohammad.abdizadeh@myant.ca}},\\
\small{\texttt{h.jamalirad@tudelft.nl}},~
\small{\texttt{a.r.singh@student.tudelft.nl}}
}
\begin{document}

\maketitle

\begin{abstract}
Classical federated learning approaches incur significant performance degradation in the presence of non-IID client data. A possible direction to address this issue is forming clusters of clients with roughly IID data. Most solutions following this direction are iterative and relatively slow, also prone to convergence issues in discovering underlying cluster formations. We introduce federated learning with taskonomy ({\flt}) that generalizes this direction by learning the task-relatedness between clients for more efficient federated aggregation of heterogeneous data. In a one-off process, the server provides the clients with a pretrained (and fine-tunable) encoder to compress their data into a latent representation, and transmit the signature of their data back to the server. The server then learns the task-relatedness among clients via manifold learning, and performs a generalization of federated averaging. {\flt} can flexibly handle a generic client relatedness graph, when there are no explicit clusters of clients, as well as efficiently decompose it into (disjoint) clusters for clustered federated learning. We demonstrate that {\flt} not only outperforms the existing state-of-the-art baselines in non-IID scenarios but also offers improved fairness across clients.\footnote{Our codebase can be found at: \url{https://github.com/hjraad/FLT/}.}
\end{abstract}

% \begin{IEEEkeywords}
% federated learning, non-IID client data.
% \end{IEEEkeywords}

%%%%%%%%% BODY TEXT
%---------------------------------------------------
\section{Introduction}
\label{sec:intro}
\vspace{-0.0cm}
%---------------------------------------------------
Federated learning is a new paradigm that offers significant potential in elevating edge-computing capabilities in modern massive distributed networks. While presenting great potential, federated learning also comes with its own unique challenges in practical settings \citep{konevcny2015federated, konevcny2016federated}. Recent studies focus on systemic heterogeneity \citep{MAL-083}, communication efficiency \citep{mcmahan2017communication, konevcny2016federated, sattler2019robust}, privacy concerns \citep{geyer2017differentially, bagdasaryan2020backdoor} and more recently on fairness \citep{pmlr-v97-mohri19a, Li2020Fair} and robustness across the network of clients \citep{48698, wang2020attack}. A defining characteristic of massive decentralized networks is stochastic heterogeneity of client data; i.e., clients possess non-independent and identically distributed (non-IID) data. Non-IIDness in client data can be due to several factors, including the following two most commonly considered aspects: i) pathological non-IIDness, where different clients can see different target classes; ii) quantity skew, where different clients can have imbalanced number of samples to train on \cite{mcmahan2017communication}. \citep{li2020federated} identifies  statistical heterogeneity as the root cause for tension between fairness and robustness constraints in federated optimization. \citep{mcmahan2017communication, li2018federated} investigate the impact of heterogeneous data distributions on the performance of federated averaging algorithm, {\fedavg} \cite{mcmahan2017communication}, and demonstrate significant performance degradation in non-IID settings. Several avenues have been explored in the literature to tackle the problem of statistical heterogeneity in federated learning settings. Personalized federated learning tackles data heterogeneity by forming personalized models for clients via meta-learning or multi-task learning \citep{smith2017federated, jiang2019improving, NEURIPS2020_24389bfe, li2020federated, li2021model}. Clustered federated learning addresses this problem by iterative (or recursive) assignment of clients to separate clusters based on model or model update comparisons at the server side \citep{NEURIPS2020_e32cc80b, mansour2020three, sattler2020clustered, briggs2020federated, xie2020multi}. The effectiveness of clustering approaches hinges upon the quality of cluster formation through this iterative assignment process. Besides, clustered federated learning approaches are sensitive to initialization, as we will demonstrate later on. More details on the related work will be provided in the next section. 

Inspired by the idea of ``taskonomy'' \citep{zamir2018taskonomy}, we explore the task-relatedness across client data distributions and cast it in the form of a client relatedness graph. This is accomplished in a \emph{one-off} fashion based on contractive encoding of client data followed by manifold learning (with \texttt{UMAP}) at the server side. The proposed approach ({\flt}) can flexibly handle a range of possibilities in incorporating this client relatedness graph for federated averaging. It can be used in generic form as an extension of {\fedavg} for non-IID data when there are no explicit clusters of clients with similar data distributions, or if need be, can be decomposed with hierarchical clustering (\texttt{HC}) to disjoint clusters and transform into clustered federated learning. Our main contributions can be summarized as follows: i) we propose federated learning with taskonomy ({\flt}), which learns the task-relatedness among clients and uses it at the server-side for federated averaging of non-IID data, without requiring any prior knowledge about the data distribution correlations among clients or the number of clusters they belong to; ii) {\flt} can flexibly discover generic client relatedness as well as an underlying clustered formation in non-IID scenarios; iii) we empirically show that {\flt} offers faster convergence compared to existing state-of-the-art baselines; iv) we provide convergence guarantees on {\flt} under common assumptions required for convergence of {\fedavg}; v) we demonstrate that {\flt} outperforms existing state-of-the-art baselines across several (synthetic and realistic) federated learning settings by about $3\%$, $9\%$, $2\%$ and $40\%$, on MNIST, CIFAR10, FEMNIST, and a newly-introduced ``Structured Non-IID FEMNIST'' dataset); vi) finally, we show that {\flt} offers improved fairness (least variance in performance across clients), besides the improved accuracy.

%---------------------------------------------------
\section{Related Work}
\label{sec:related}
\vspace{-0.0cm}
%---------------------------------------------------
\textbf{Federated learning of non-IID data.} An early proposition to handle non-IID data in federated learning was to create a globally shared dataset comprised of a small subset of data from each client \cite{zhao2018federated}. However, validity of this approach has been debated from increased communication costs, privacy, and security perspectives \cite{MAL-083}. \cite{li2018federated} proposed to add a proximal term to the local optimization sub-problems in order to limit the impact of ``variable'' local updates. This problem is phrased as ``client drift'' in \cite{karimireddy2020scaffold} and a set of control variates are communicated between the clients and server to correct for this drift. In a somewhat related context but on different type of data (such as medical), other studies propose changing the global optimization cost-function or weighting the aggregation at the server side to account for dissimilarities among data distributions \cite{yeganeh2020inverse, laguel2020device, zhao2021federated}. 

\textbf{Personalized and multi-task federated learning.} This line of research tackles data heterogeneity by forming personalized models for clients via meta-learning or multi-task learning \cite{jiang2019improving, NEURIPS2020_24389bfe, smith2017federated}. The work presented in \cite{smith2017federated} extends multi-task learning to federated learning; however, it relies on alternating bi-convex optimization which limits its applicability to only convex objective functions. In meta-learning setting \cite{jiang2019improving, NEURIPS2020_24389bfe}, first a single global model is obtained at the server, based on which each client fine-tunes its model. However, this global model would not serve as a good initialization if the client data distributions are considerably different.  

\textbf{Clustered federated learning.} Akin to our high-level approach, two iterative approaches are proposed in \cite{NEURIPS2020_e32cc80b, mansour2020three} in order to assign clients to separate clusters and train sub-models per cluster. However, several rounds of communication are required until the formation of clusters is solidified. More specifically, in \cite{NEURIPS2020_e32cc80b} per iteration the cluster sub-models have to be sent to all the other active clients in that iterations, which is demanding in terms of communication cost. The clustering methodologies proposed in \cite{sattler2020clustered, briggs2020federated, duan2020fedgroup} are based on recursive bi-partitioning with cosine similarity between model updates as metric. Owing to the recursive nature, their compute and communication overhead can become a bottleneck in large-scale settings. A multi-center federated learning approach is proposed in \cite{xie2020multi} where clients are clustered based on their model parameter differences with randomly initialized cluster-level models. We argue that model parameters are just an implicit proxy of client data distributions, whereas our approach directly exploits the client data itself, and distills it into its signature encoding for clustering. Notably, the approach of \cite{xie2020multi} is sensitive to model initialization  and also requires prior knowledge about the number of clusters. The cited approaches are mostly iterative and either slow in convergence or costly from communication perspective. We instead propose a one-shot solution based on client relatedness. In a concurrent work, \cite{Dennis2020hetero} focuses on one-shot federated clustering. After projecting client data onto a selected subspace, the iterative Lloyd's $k$-means clustering \cite{lloyd1982least} is applied and the outcomes is communicated to the server in a one-shot fashion for cluster assignment. Firstly, \cite{Dennis2020hetero} does not study the impact of the proposed clustering on downstream federated learning applications. Besides, linear subspace decomposition is in practice less efficient than the non-linear (and fine-tuned) autoencoders we employ at client side. Moreover, our manifold learning approach with \texttt{UMAP} \cite{mcinnes2018umap} at the server side has the potential to approximate the underlying manifold of data in a more generic fashion as we empirically demonstrate in the following.  

\textbf{Notation.} In the following, we use $\| X \|_l$ to denote norm-$l$ of matrix $X$, and $|\mathcal{X}|$ to denote the cardinality of set $\mathcal{X}$. We denote the set $\{1, \cdots, n\}$ with $[n]$. We refer to the $(i, j)$-th element of matrix $X$ as $X_{i,j}$ and its $i$-th row with $X_i$. $\1_n$ denotes a row vector of size $n$ containing all $1$'s. 

%---------------------------------------------------
\section{Federated Learning with Taskonomy}
\label{sec:clustFL}
\vspace{-0.0cm}
%---------------------------------------------------
In this section, we first formalize the classical federated learning setting and objectives, and then introduce {\flt}.

%---------------------------------------------------
\subsection{Preliminaries: Refresher on {\fedavg}}
\label{ssec:fedavg}
\vspace{-0.0cm}
%---------------------------------------------------
\begin{algorithm}[tb!]
	%\SetKwInput{Init}{Memory}
	\SetKwInput{Require}{Require}
	\SetAlgoLined
	\DontPrintSemicolon
	\SetNoFillComment
	%\Init{current best model $g^{*}$, current buffer $\mathcal{B}$}
	\Require{$\overbar{M}$, $T$, $w^{0}$, $p_m$}
	\For{$t = 0, \cdots, T - 1$}{
		Server selects a subset $\mathcal{S}^t$ of clients ($\overbar{M} = |\mathcal{S}^t|$);\;
		Server sends $w^t$ to all the selected clients.\;
		\For{\textup{each client} $m$}{
			\For{\textup{epoch} $e = 1, \cdots, E$}{
				$w_m \gets w_m - \eta \nabla F_m(w_m)$\;
			}
			$w_m^{t+1} \gets w_m$\;
		}
		Each client $m$ sends $w_m^{t+1}$ to the server.\;
		Server aggregates $w_m$'s and updates $w$:\;
		\qquad \qquad $w^{t+1} \gets \sum_{m = 1}^{\overbar{M}} p_m \, w_m^{t+1} $\;
	}
	\caption{Federated Averaging ({\fedavg})}\label{alg:fedavg}
\end{algorithm}
%\vspace*{-.4cm}
%\end{algorithm}
%
In a classical federated learning setting, we consider $M$ clients (in practice, hundreds) with $n_m$ local data samples and communicating their learning regularly to a central server to reach global consensus about the whole data composed of $N = \sum_{m} n_m$ samples. In most prior work, the goal is to solve\vspace{-0.2cm}
\begin{equation}\vspace{-0.2cm}
\label{classFL}
\min_w f(w) = \sum_{m = 1}^{M} p_m F_m(w),
\end{equation}
where $p_m = \frac{n_m}{N}$ is the fraction of total data client $m$ sees, and thus, $\sum_m p_m = 1$. The local objective $F_m$ is typically defined by the empirical risk over local data $F_m = \frac{1}{n_m} \sum_{j = 1}^{n_m} l_j(w)$. Federated learning is conducted in regular communication rounds (server to clients, and vice versa) and per round $t$ typically a subset of clients $\mathcal{S}^t$ are randomly selected to run stochastic gradient descent (SGD) for a given number of local epochs. This \emph{local updating} mechanism is shown to be more flexible and efficient than mini-batch methods \citep{stich2018local, wang2018cooperative, woodworth2018graph}. Algorithm~\ref{alg:fedavg} summarizes {\fedavg} \cite{mcmahan2017communication}, a pioneering method to solve \eqref{classFL} in a non-convex setting. The protocol is simple: in $T$ consecutive rounds, selected client $m$ runs $E$ epochs of SGD (with learning rate $\eta$) on local data and shares the local model $w_m$ with the server to be averaged among $\overbar{M}$ participating clients. For the sake of simplicity, the client participation fraction $\rho = \overbar{M}/M = |\mathcal{S}^t|/|\mathcal{S}|$ is typically considered to be constant.  

%---------------------------------------------------
\subsection{Mechanics of {\flt}}
\label{ssec:flt}
\vspace{-0.0cm}
%---------------------------------------------------
Majority of the clustered federated learning approaches \citep{NEURIPS2020_e32cc80b, mansour2020three, sattler2020clustered, briggs2020federated, duan2020fedgroup} enforce a hard membership constraint on the clients to form disjoint clusters where every client can belong to only one cluster. In contrast, we allow for an arbitrary symmetric task-relatedness matrix with the possibility to be reordered and relapsed into disjoint clusters. To form clusters, these approaches mostly compare the clients based on their model parameters using a distance metric (such as $L_2$ norm, or cosine similarity), which in practice does not capture the underlying manifold of data in the model parameter space or any other representation space. We instead propose a one-shot method for learning the task-relatedness matrix (coined as {\flt}) that benefits from manifold approximation (metric learning with \texttt{UMAP} \citep{mcinnes2018umap}) at the server side before applying a distance metric. In the following, we delve deeper into the mechanics of {\flt}.

\textbf{Overview.} A high-level sketch of the proposed approach is depicted in Fig.~\ref{fig:architecture}. As can be seen, we consider three abstraction levels: i) data level, where data samples live in $\mathbb{R}^d$; ii) encoder level, where a contractive latent space representation of client data is extracted in an unsupervised fashion (samples are \emph{nonlinearly} projected to $\mathbb{R}^e$); iii) manifold approximation level with \texttt{UMAP}, where samples live in $\mathbb{R}^u$. The encoder is provided by the server to the clients. This allows them to apply one-shot contractive encoding on their local data, followed by $k$-means on the outcome and return the results to the server. At server side, \texttt{UMAP} is applied to approximate the arriving clients embeddings. This is followed by applying a distance threshold to determine client dependencies and form an adjacency matrix or a client (task) relatedness graph. If forming disjoint clusters is of interest, we then use hierarchical clustering \cite{mullner2011modern} to efficiently reorder the adjacency matrix (or corresponding client relatedness graph) into disjoint clusters (see Fig.~\ref{fig:graph_adjacency}).
% %
% \begin{figure*}[t!]
% 	\begin{minipage}{.65\linewidth}
% 		\centering
% 		\includegraphics[width=0.99\textwidth]{./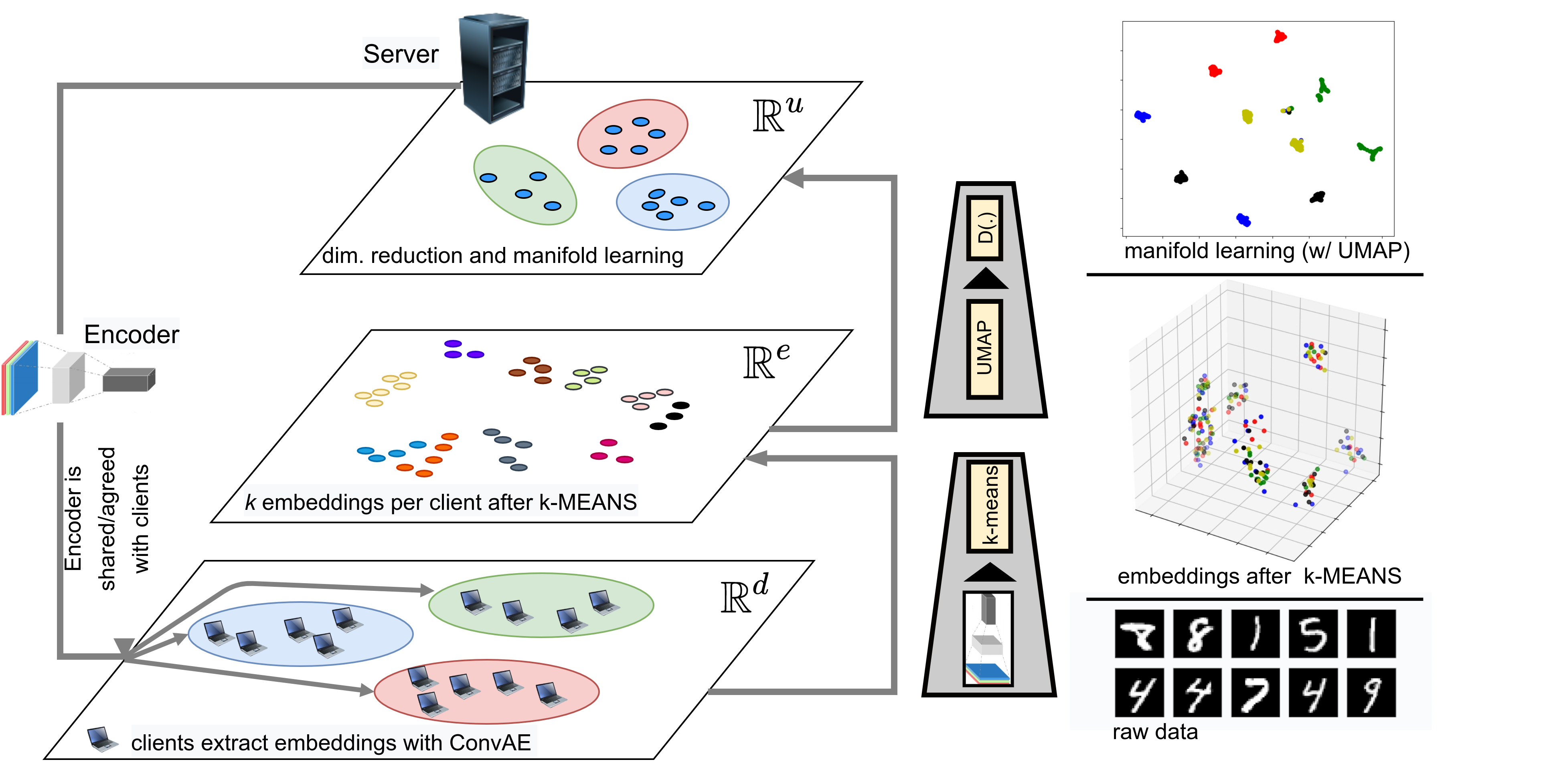}
% 		\vspace{-0.1cm}
% 		\caption{High-level architecture of {\flt}.}
% 		\vspace{-0.0cm}
% 		\label{fig:architecture}
% 		\vspace{-0.0cm}
% 	\end{minipage}
% 	\begin{minipage}{.35\linewidth}
% 		\centering
% 		\includegraphics[width=0.8\textwidth]{./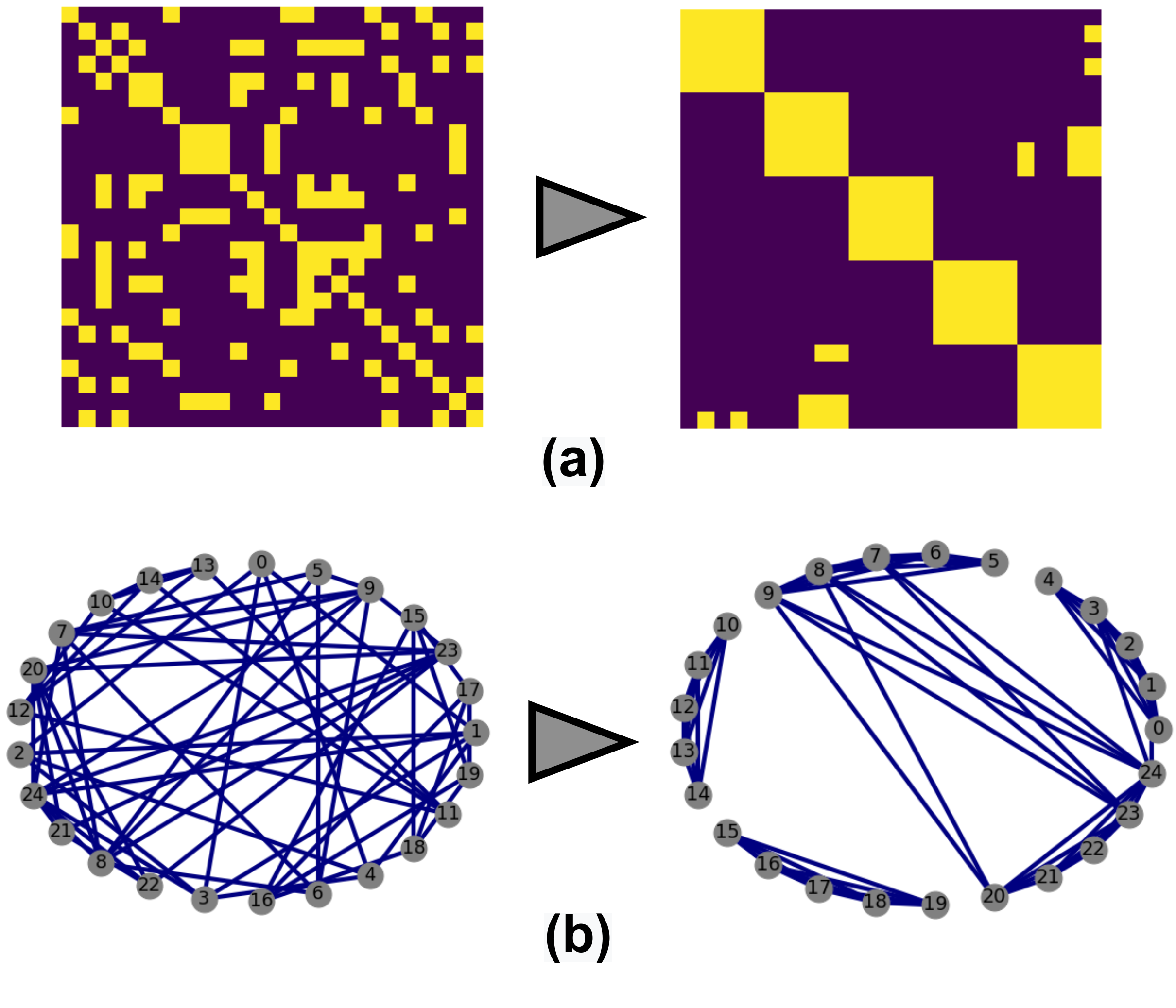}
% 		\vspace{-0.1cm}
% 		\caption{Forming $C = 5$ clusters for a network of $M = 25$ clients with hierarchical clustering. a) adjacency matrix and b) corresponding client relatedness graph (both reordered on the right).}
% 		\vspace{-0.0cm}
% 		\label{fig:graph_adjacency}
% 	\end{minipage}	
% 	\vspace{-0.0cm}
% \end{figure*}
% %

\begin{figure*}[t]
	\centering
	\includegraphics[width=0.9\textwidth]{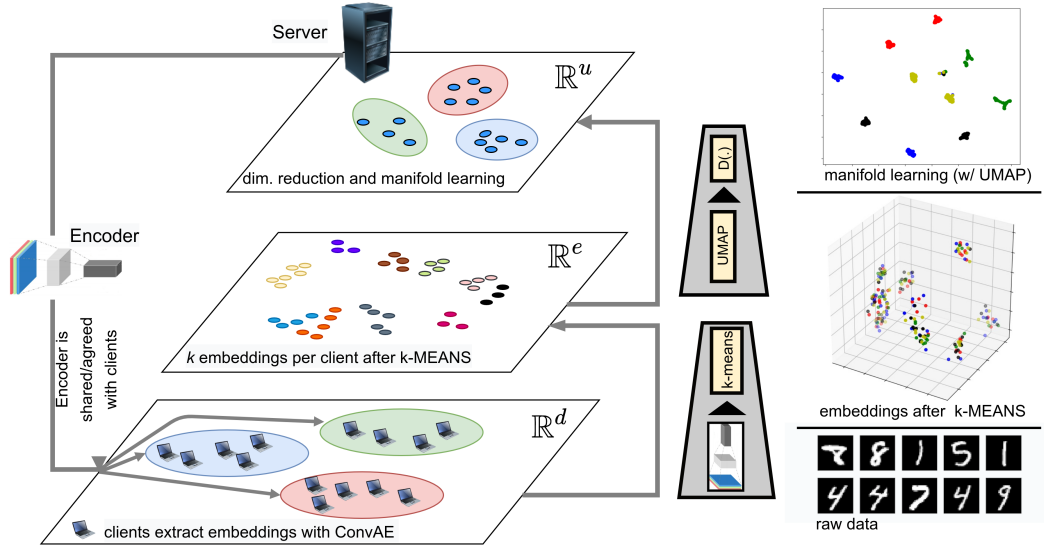}
	\vspace{-0.1cm}
	\caption{High-level architecture of {\flt}. Clients extract their data signature using the agreed ConvAE. They then apply $k$-MEANS to further condense their signatures and transmit $k$ embedding vectors to the server. The server then applies manifold learning using UMAP followed by a distance metric to form an adjacency matrix. Hierarchical clustering (HC) is finally applied if formation of disjoint clusters is of interest as is shown in Fig.~\ref{fig:graph_adjacency}.}
	\vspace{-0.0cm}
	\label{fig:architecture}
	\vspace{-0.0cm}
\end{figure*}

\begin{figure*}[t]
	\centering
	\includegraphics[width=0.8\textwidth]{./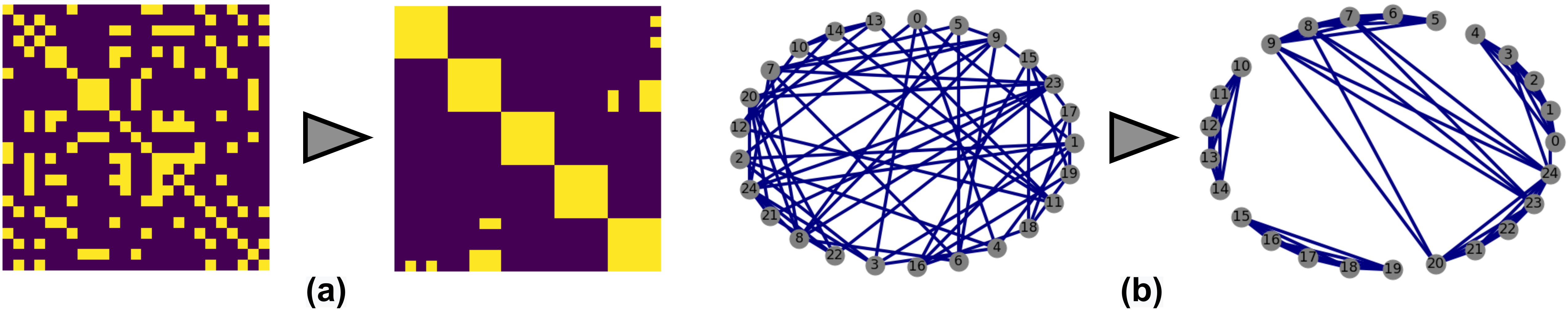}
	\vspace{-0.1cm}
	\caption{Forming $C = 5$ clusters for a network of $M = 25$ clients with hierarchical clustering. a) adjacency matrix and b) corresponding client relatedness graph (both reordered on the right).}
	\vspace{-0.0cm}
	\label{fig:graph_adjacency}
\end{figure*}	

\textbf{Learning client (task) relatedness.} The proposed approach, $\texttt{FCR}$, is described in Algorithm~\ref{alg:formcluster}. The server broadcasts an encoder $G(\cdot)$ to all the $M$ clients in $\mathcal{S}$. Note that this can also be an agreement (between the sever and clients) on using an encoder pretrained on a standard dataset (without virtually sending it), similar to the case of local models being used by clients in {\fedavg}. Nonetheless, this is \emph{one-off}, and this downlink communication/agreement does not have to be repeated unless in an exceptional case where the server decides to change the architecture of the encoder. Note that extracting the signature of client data based on an encoder, instead of model weights for instance, is a key component of {\flt}, and is inspired by taskonomy \cite{zamir2018taskonomy}. While being a significant contributor to the performance of {\flt}, the fact that in some scenarios this encoder has to be fine-tuned on the client data can impose an extra burden. On the other had, since we only consider a \emph{simple} convolutional autoencoder (ConvAE) with its frozen encoder section employed for extracting latent embeddings, this fine-tuning is a straightforward process with negligible added complexity on the client side. In our experimentation (Section~\ref{sec:eval}), this model is as simple as the local client models designed for the downstream task. ConvAE not only helps compressing the information that has to be sent to the server, but also creates a less noisy representation of the client data. Upon receiving $G(\cdot)$ clients compute $\mathcal{E}_m := G(\mathcal{D}_m)$, where $\mathcal{D}_m$ denotes the dataset of client $m$ ($\in [M]$) and $\mathcal{E}_m$ denotes its embedding set of size $|\mathcal{D}_m|$. The elements of $\mathcal{E}_m$ live in $\mathbb{R}^e$ with $e$ referring to the latent embedding dimension. Even though $\mathcal{E}_m$ is compressed as compared to $\mathcal{D}_m$, it turns out that it can still be further distilled and yet capture enough information for our downstream federated learning purpose. Therefore, each client applies $\texttt{kMEANS}(\cdot)$ on $\mathcal{E}_m$ and sends the outcome $\mathcal{M}_m := \{\mu_1, \cdots, \mu_k\}$ (a set of size $k$) back to the server. The \emph{fine-tune} mode is provisioned to accommodate encoders pretrained on a \emph{totally different} dataset. In such a case, clients will be asked to run $F$ epochs of SGD from their latest state on their most recent dataset. As we will demonstrate in Section~\ref{sec:eval}, this is to establish that the choice of dataset for pretraining the encoder is not a bottleneck. 

The server then constructs $\mathcal{M} := \{\mathcal{M}_1, \cdots, \mathcal{M}_M\}$ and applies \texttt{UMAP} \citep{mcinnes2018umap} to $\mathcal{M}$ and computes $\mathcal{Z} := \{\mathcal{Z}_1, \cdots, \mathcal{Z}_{M}\}$ with $\mathcal{Z}_m := \{u_{m,1}, \cdots, u_{m,k}\}$. $\mathcal{Z}$ contains $k \times M$ elements each living in $\mathbb{R}^u$, with $u$ being typically $2$ or $3$. In most prior work, a distance metric ($L_2$ or cosine) is directly applied, which could be a limiting factor for non-convex risk functions and incongruent non-IID settings \citep{sattler2020clustered}. Instead, in $\texttt{FCR}$ the server first learns the manifold in which the embeddings live using \texttt{UMAP} and then applies a distance metric to construct an adjacency matrix $A_{i,j} := \min_{r,s} \|u_{i,r} - u_{j,s}\|_2$, $\forall i, j \in [M]$ \& $\forall r,s \in [k]$, where the minimum pairwise distance among the elements of $\mathcal{Z}_i$ and $\mathcal{Z}_j$ are taken into account. Here, for the sake of simplicity, we consider a \emph{hard-thresholding} operator $\Gamma$ applied on $A$ leading to $\tilde{A}$, where $\tilde{A}_{i,j} = \Gamma(A_{i, j})=\texttt{Sign}(A_{i, j} - \gamma)$ with $\gamma$ a threshold value. In practice, $\gamma$ can be tuned to return best performance in different settings. If constructing \emph{explicit and disjoint} clusters is of interest, the severs applies hierarchical clustering ($\texttt{HC}$) \cite{mullner2011modern} to reorder $\tilde{A}$ into $\tilde{A}^r$ with $C$ disjoint clusters (diagonal blocks). Finally, the server extracts cluster membership in $\mathcal{C} = \{\mathcal{C}_1, \cdots, \mathcal{C}_C\}$ with $\mathcal{C}_c$'s being a set of client ID's in cluster $c$. A flexibility that hierarchical clustering in \cite{mullner2011modern} offers is to propose the best fitting number of clusters, if maximum number of clusters is not specified. $\tilde{A}$ and $\tilde{A}^r$ for a toy setup with $M = 25$ clients are depicted in Fig~\ref{fig:graph_adjacency}: a) adjacency matrix, b) client relatedness graph. On the left, the clients are arranged based on their ID's and on the right they are re-ordered with hierarchical clustering \cite{mullner2011modern} to form $C = 5$ clusters. It is noteworthy that this one-shot client relatedness discovery of $\texttt{FCR}$ can also happen in a few stages, in case all clients are not available for cooperation at the initialization stage. Besides, excluding a few clients from the process (for any reason) will not impact the performance of {\flt}.
\LinesNumbered
\begin{algorithm}[t]
	%\SetKwInput{Init}{Memory}
	\SetKwInput{Require}{Require}
	\SetKwInput{Return}{Return}
	\SetAlgoLined
	\DontPrintSemicolon
	\SetNoFillComment
	%\Init{current best model $g^{*}$, current buffer $\mathcal{B}$}
	\Require{$\texttt{MODE}$, $\mathcal{S}$, $G(\cdot)$, $\Gamma(\cdot)$, $k$, $C$}
	
	%\textbf{if} $\mathcal{P} = \emptyset$ \textbf{then} initialize \textbf{else} fine-tuning.\;  
	Server broadcasts $G(\cdot)$ to all clients in $\mathcal{S}$\;
% 	\eIf{$\mathcal{P} = \emptyset$}{\texttt{MODE} = normal, $\mathcal{P} \gets \mathcal{S}$ \;
% 		Server broadcasts $G(\cdot)$ to all clients $\mathcal{S}$}
% 	{
% 		\texttt{MODE} = fine-tuning
% 	}
% 	%
	\For{\textup{each client $m$ in $\mathcal{S}$}}{
		\If{$\textup{\texttt{MODE}} = \textup{fine-tune}$}{
			Client runs $F$ epochs to fine-tune\;
		}
		Client computes its own embedding:\; 
		\quad $\mathcal{E}_m \gets G(\mathcal{D}_m)$\; 
		Client applies $k$-means clustering to $\mathcal{E}_m$'s:\vspace{+0.04cm}\;  
		\quad $\mathcal{M}_m := \{\mu_1, \cdots, \mu_k\} \gets  \texttt{kMEANS}(\mathcal{E}_m)$\vspace{+0.04cm}\; 
		Client sends $\mathcal{M}_m$ to the server
	}
	Server updates $\mathcal{M} := \{\mathcal{M}_1, \cdots, \mathcal{M}_M\}$\;
	Server (re)computes\; 
	\qquad $\mathcal{Z} := \{\mathcal{Z}_1, \cdots, \mathcal{Z}_{M}\} \gets \texttt{UMAP}(\mathcal{M})$\;
	Server constructs the adjacency matrix:\vspace{+0.04cm}
	$A_{i,j} := \min_{r,s} \|u_{i,r} - u_{j,s}\|_2$,\\ \qquad \qquad \qquad \qquad  $\forall i, j \in [M]$ \& $\forall r,s \in [k]$\vspace{+0.04cm}\; 
	Server applies thresholding $\tilde{A} := \Gamma(A)$\;
	%Server forms disjoint clusters $\tilde{A}^c \gets \texttt{HC}(\tilde{A})$;\;
	Server forms $C$ clusters with hiearchical clustering:\vspace{+0.05cm}\\ 
	\qquad $\mathcal{C} = \{\mathcal{C}_1, \cdots, \mathcal{C}_C\} \gets \texttt{HC}(\tilde{A})$\;\vspace{+0.1cm}
	\Return{$\tilde{A}$ and $\mathcal{C}$}\vspace{+0.1cm}
	\caption{Form Client Relatedness (\texttt{FCR})}\label{alg:formcluster}
\end{algorithm}

\LinesNumbered
\begin{algorithm}[t]
	%\SetKwInput{Init}{Memory}
	\SetKwInput{Require}{Require}
	\SetAlgoLined
	\DontPrintSemicolon
	\SetNoFillComment
	%\Init{current best model $g^{*}$, current buffer $\mathcal{B}$}
	\Require{$\mathcal{S}$, $M$, $T$, $W^{0}$, $p_m$}
	\textbf{Initialize Clustering:}\;
	\qquad $\tilde{A}, \mathcal{C} \gets \texttt{FCR}(\textup{normal}, \mathcal{S}, G(\cdot), \Gamma(\cdot), k)$\;
	\For{$t = 0, \cdots, T - 1$}{
		$w^t_1, \cdots, w^t_m \gets W^t$\;
		Server selects a subset $\mathcal{S}^t$ of clients\;
		% \textup{\textbf{Option II:} dynamic clustering based on $\mathcal{S}^t$:}\;
		% $\tilde{A}^t \gets \FormCluster(\mathcal{S}^t, G(\cdot), U(\cdot), k)$\;
		Server sends $w^t$ to all clients in $\mathcal{S}^t$\;
		\For{\textup{each client} $m$ in $\mathcal{S}^t$}{
			\For{\textup{epoch} $e = 1, \cdots, E$}{
				$w_m \gets w_m - \eta \nabla F_m(w_m)$\;
			}
			$\overbar{w}_m^{t+1} \gets w_m$\;
		}
		Each client sends  $\overbar{w}_m^{t+1}$ and $\delta$ to the server\;
		Server collects\, $\overbar{W}^{t+1} = [\,\overbar{w}^{t+1}_1, \cdots,\, \overbar{w}^{t+1}_M]$\;
% 		\{\textbf{Dynamic Clustering (optional):}\; 
% 		each client sends data change flag $\delta$ \;
% 		Server updates $\Delta = [\delta_1, \cdots, \delta_M]$;\;
% 		\If{$\sum_i{\delta_i} > \lambda$ \textup{\textbf{or}} $(t \bmod \tau = 0)$}{
% 			$\mathcal{P} \gets \textup{supp}(\Delta)$\;
% 			$\tilde{A} \gets \texttt{FCR}(\mathcal{P}, \mathcal{S}, G(\cdot), \Gamma(\cdot), k)$\;
% 			$\Delta \gets [0, \cdots, 0]$\;}\}\;
		Server aggregates and updates\, $\overbar{W}^{t+1}$:\;
		\textbf{Case I}) Full adjacency matrix $\tilde{A}$:\;  
		\quad $W^{t+1} \gets  \overbar{W}^{t+1} \tilde{A} \,\texttt{diag}(p_m / \|\tilde{A}_m\|_0)$\;
		\textbf{Case II}) Disjoint clusters based on $\mathcal{C}$:\;  
		\quad $w^{t+1}_c \gets \frac{1}{|\mathcal{C}_c|}\sum_{m \in \mathcal{C}_c} p_m \, \overbar{w}_m^{t+1}, \, \forall c \in [C]$\;
		\quad $W^{t+1} = [\mathbf{1}_{|\mathcal{C}_1|} w^{t+1}_1, \cdots, \mathbf{1}_{|\mathcal{C}_C|} w^{t+1}_C]$\\
		
        \{\textbf{Dynamic Clustering (optional):}\; 
		Server updates $\Delta = [\delta_1, \cdots,  \delta_{M+M'}]$;\;
		\If{$\sum_i{\delta_i} > \lambda$ \textup{\textbf{or}} $(t \bmod \tau = 0)$}{
			%$\mathcal{P} \gets \textup{supp}(\Delta)$\;
			$\mathcal{S} \gets \mathcal{S} \cup \{M+1, \cdots, M+ M'\}$\;
			$\tilde{A}, \mathcal{C} \gets \texttt{FCR}(\textup{fine-tune}, \mathcal{S}, G(\cdot), \Gamma(\cdot), k)$\;
			$\Delta \gets [0, \cdots, 0]$, $M \gets M+ M'$\;}\}\;
			}
	
	\caption{{\flt}}\label{alg:flt}
\end{algorithm}

\textbf{Clustering performance of {\flt}.} Let us assume that the data of each client is generated from a mixture of $L$ components, where $L$ is the total number of target classes (labels). The goal of the encoder is to transform the data into a latent embedding such that separability amongst data labels is maximized for clustering. Let us denote the classification error of the encoder part of the pretrained ConvAE by $P_e^{\text{cae}}$. For sake of simplicity, assume that latent embeddings are a mixture of Gaussian components. In such a case, the following theorem provides an upper-bound on the total clustering error of {\flt}:
\begin{theorem}
Clustering error of {\flt} is upper-bounded by:
\begin{align}
 P^{tot}_{\textup{err}} \lessapprox \sum_{m=1}^{M} \sum_{l=1}^{L} \sum_{i=1}^{e}  \exp\left(-\frac{\left(n_m^l \times Q^{-1}(P_{\textup{err}}^{\text{cae}}) \right)^2}{2}\right).   
\end{align}
where, $Q$ represents the $Q$-function and, $n_m^l$ denotes the total number of samples with label $l$ for client $m$, and $e$ is the dimension of the latent space .
\end{theorem}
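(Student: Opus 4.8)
The plan is to reduce the end-to-end clustering error to a sum of per-centroid misclassification events and to control each one with a Gaussian tail bound. First I would fix the probabilistic model suggested in the setup: for client $m$, the latent embeddings carrying label $l$ form a Gaussian cluster $\mathcal{N}(\mu_m^l,\Sigma)$ in $\mathbb{R}^e$, and the hypothesis that the ConvAE encoder has classification error $P_{\mathrm{err}}^{\mathrm{cae}}$ is translated into a coordinate-wise separation margin. Concretely, for a balanced two-Gaussian decision the Bayes error equals $Q(\Delta/(2\sigma))$, so $Q^{-1}(P_{\mathrm{err}}^{\mathrm{cae}})$ plays the role of the \emph{normalized margin} between a class mean and the nearest decision boundary, measured in units of the coordinate-wise standard deviation. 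This identity is the bridge that lets $P_{\mathrm{err}}^{\mathrm{cae}}$ enter the bound analytically, and it is where the $Q$-function in the statement originates.

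Next I would analyze the client-side $\texttt{kMEANS}$ step, since the objects actually transmitted to the server are the centroids $\mathcal{M}_m=\{\mu_1,\dots,\mu_k\}$. A clustering error originates when a centroid associated with label $l$ lands on the wrong side of the separating region. Because each such centroid is an empirical average over the $n_m^l$ embeddings of label $l$, its coordinate-wise fluctuation around the true class mean shrinks as $n_m^l$ grows, so the effective margin seen by the decision boundary is amplified by the pooled sample count. Consequently the event that coordinate $i$ of the centroid crosses the boundary has probability of the form $Q\big(n_m^l\, Q^{-1}(P_{\mathrm{err}}^{\mathrm{cae}})\big)$, and applying the standard tail inequality $Q(x)\le\exp(-x^2/2)$ yields exactly the summand $\exp\big(-(n_m^l\, Q^{-1}(P_{\mathrm{err}}^{\mathrm{cae}}))^2/2\big)$.

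The final step is a union bound. Treating a total clustering error as the occurrence of \emph{any} coordinate-wise centroid misassignment anywhere in the network, I would sum the per-event probabilities over the three indices that range independently, namely clients $m\in[M]$, labels $l\in[L]$, and latent coordinates $i\in[e]$, which reproduces the triple sum in the statement. The $\lessapprox$ rather than a strict $\le$ is what lets me absorb the Gaussian-tail prefactors and treat the downstream server pipeline ($\texttt{UMAP}$, the distance thresholding $\Gamma$, and $\texttt{HC}$) as separability-preserving, so that the adjacency and clustering decisions inherit the $\texttt{kMEANS}$ error without inflating it.

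I expect the main obstacle to be precisely this last assumption: $\texttt{UMAP}$ is a nonlinear, optimization-based embedding with no clean distance-preservation guarantee, so rigorously showing that it does not degrade the Gaussian separability, and hence that the server-side thresholding reproduces the $\texttt{kMEANS}$ decisions, is the delicate part of the argument and will likely have to be argued at the level of a modeling assumption rather than a theorem. A secondary subtlety is pinning down the exact power of $n_m^l$ in the amplified margin: the concentration-of-the-mean argument governs how the centroid fluctuation scales with $n_m^l$, and care is needed to carry that dependence consistently into the exponent so that it matches the stated form.
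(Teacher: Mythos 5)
Your proposal follows essentially the same route as the paper: a Gaussian latent model in which $Q^{-1}(P_{\textup{err}}^{\text{cae}})$ plays the role of the normalized margin, amplification of that margin by the per-label sample count through averaging at the \texttt{kMEANS} stage, the Chernoff bound $Q(x)\le \exp(-x^2/2)$ applied to each per-(client, label, coordinate) error, and a triple sum over $m$, $l$, $i$ (the paper reaches that sum via an independence product $1-P^{tot}_{\textup{err}}=\prod_{m,l,i}(1-P^{m,l,i}_{\textup{err}})$ expanded to first order rather than your union bound, but the outcome is identical). Your flagged subtlety about the power of $n_m^l$ is well placed: the paper simply asserts $P^{m,l,i}_{\textup{err}} = Q\bigl(n_m^l\, Q^{-1}(P_{\textup{err}}^{\text{cae}})\bigr)$ without deriving why the margin scales linearly in $n_m^l$ rather than as $\sqrt{n_m^l}$, which is what concentration of the empirical mean would give.
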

This demonstrates that by increasing the sample size, the total clustering error of {\flt} vanishes, as desired. The detailed proof can be found in Appendix~\ref{ssec:clusteringaccuracy}.

\textbf{Federated averaging with taskonomy.} {\flt} in Algorithm~\ref{alg:flt} starts with an initialization stage by calling the \emph{normal} mode of $\texttt{FCR}$. This returns the adjacency (or client relatedness) matrix $\tilde{A}$ together with an optional cluster membership set $\mathcal{C}$ for the case of disjoint clustering. Next, the typical $T$ rounds of communication akin to {\fedavg} will be run. The server sends across a model corresponding to each client in the (randomly) selected (time-varying) subset $\mathcal{S}^t$ (line 6). Similar to {\fedavg}, the client participation rate is defined as $\rho = |\mathcal{S}^t|/|\mathcal{S}|$. In case of disjoint clustering, all clients in cluster $\mathcal{C}_c$ will be given the same model $w_c^t$. Each client run $F$ epoch on its local data and sends back the updated \emph{local} model\, $\overbar{w}_m^{t+1}$. The top bar notation is used to denote client-side models. The server collects all the local models in \, $\overbar{W}^{t+1} = [\,\overbar{w}^{t+1}_1, \cdots,\, \overbar{w}^{t+1}_M]$ and updates them according to two possible cases. The first option (Case I) uses the full client relatedness matrix $\tilde{A}$ to benefit from the all the related client models (line 17). In this case, the sever updates the local model weights using $W^{t+1} =  \overbar{W}^t \, \tilde{A} \, \texttt{diag}(p_m / \|\tilde{A}_m\|_0)$. For notation simplicity, each model parameter set $w^{t+1}_m$ is assumed to be reshaped into a column vector. The second option (case II) is to define an update rule per disjoint cluster according to cluster membership in $\mathcal{C}$. In that case, standard {\fedavg} will be applied per cluster (line 19) and the aggregated model of the cluster, $w^{t + 1}_c$, will be sent to all the clients in the cluster in the next round (line 20). 

Last but not least, we reflect on an \emph{optional} dynamic clustering functionality of {\flt} in lines $13$, and $21$ to $28$ of Algorithm~\ref{alg:flt}. Here, we consider two possible circumstances: i) the data distribution of (some) clients varies suddenly or with time, e.g., new classes are introduced; ii) newcomer clients join the network. In such cases, the clients can \emph{optionally} send a flag of state change ($\delta := 1$) over along with their updated model parameters (line $13$) to notify the server of the change in their data or the state of being a newcomer. The server keeps track of these flags in $\Delta$ and once a certain number of clients have raised such flags (say $\lambda$ clients), it calls for repeating the cluster formation process. This needs further client cooperation and is prone to byzantine attacks. There are two other possibilities to do this without any client contribution. This can happen every $\tau$ rounds as denoted in line $18$ or can happen on the server side, by monitoring client model parameter change as determining factor, similar to the main clustering approach of \cite{xie2020multi, briggs2020federated}. In any case, if (e.g. $M'$) new clients are introduced and/or some clients have experienced data change, the server will update $\mathcal{S}$ accordingly (line $24$) and calls \texttt{FCR} with \texttt{MODE} set to fine-tune, possibly resulting in a new client relatedness graph and cluster membership ($\tilde{A}, \mathcal{C}$). The new clients will then be incorporated in the next communication round. Another possibility to handle newcomer clients is to assign them to the existing cluster formation $\mathcal{C}$ based on their distance to the (average) latent embeddings of the clusters. In favor of limited space, these directions are left as our future work.

\textbf{Convergence of {\flt}.} The following theorem generalizes convergence characteristics of {\fedavg} to that of {\flt} under same regularity assumptions. A detailed proof of this theorem is provided in Appendix~\ref{ssec:eval}.

\begin{theorem}
Let $G_\mathbb{W}(W^t) \overset{\Delta}{=} \| W^{t} - W^{t-1} \|^2$ represent the optimality gap for the stationary solution of {\flt}, where $W^t$ stands for model parameters in communication round $t$. Under common regularity assumptions \citep{wang2020}, the iterative gradient descent solution for {\flt} satisfies:
\begin{align} 
\label{eq:theo2}
\frac{1}{TM}&\sum_{i=1}^M \sum_{t = 1}^{T} \mathbb{E}[G_\mathbb{W}(W^t)] \nonumber \\ &\qquad \leq \frac{1}{T \times M}\frac{F(W^1) - F(W^{T})}{(\frac{1}{2\eta} - \frac{L_{W}}{2} - \frac{\eta}{2}L_{W}^2 \| I_M - \overbar{A} \|^2)},
\end{align}
where the cost function $F$ is assumed to be $L_W$-smooth, $I_M$ is the $M \times M$ identity matrix, and $\overbar{A}$ and $\eta$ denote the row-normalized adjacency matrix and the step size, respectively.
\end{theorem}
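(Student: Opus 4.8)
The plan is to follow the standard descent-lemma argument for non-convex $L_W$-smooth objectives, adapted to the matrix-valued aggregation of {\flt}. First I would stack the per-client models into the matrix $W^t = [w_1^t, \dots, w_M^t]$ and rewrite the Case~I update of Algorithm~\ref{alg:flt} compactly as $W^{t+1} = \overbar{W}^{t+1}\,\overbar{A}$, where $\overbar{W}^{t+1}$ is the matrix of locally-updated models after $E$ epochs of SGD and $\overbar{A}$ is the row-normalized client-relatedness matrix (folding the $\texttt{diag}(p_m/\|\tilde{A}_m\|_0)$ weighting into the normalization, so that $\overbar{A}$ is row-stochastic). Writing the net effect of the local trajectory as $\overbar{W}^{t+1} = W^t - \eta\,\nabla F(W^t)$ with $\nabla F(W^t) = [\nabla F_1(w_1^t),\dots,\nabla F_M(w_M^t)]$ (the multi-epoch case being absorbed by the usual local-SGD accounting), the one-round increment decomposes as
\[
W^{t+1} - W^t = -\eta\,\nabla F(W^t)\,\overbar{A} + W^t(\overbar{A} - I_M),
\]
which cleanly separates the gradient-descent contribution from the mixing (consensus) contribution carried by $\overbar{A} - I_M$.

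Next I would invoke $L_W$-smoothness of $F$ to obtain the descent inequality
\[
F(W^{t+1}) \le F(W^t) + \langle \nabla F(W^t), W^{t+1}-W^t\rangle + \tfrac{L_W}{2}\|W^{t+1}-W^t\|^2,
\]
and substitute the decomposition above. The inner-product term splits into a negative piece $-\eta\langle\nabla F(W^t), \nabla F(W^t)\,\overbar{A}\rangle$, which drives the descent, and a cross piece involving $W^t(\overbar{A}-I_M)$. I would control the cross term with Young's inequality, which is precisely where $\|I_M - \overbar{A}\|^2$ enters: bounding $\|W^t(\overbar{A}-I_M)\| \le \|W^t\|\,\|I_M-\overbar{A}\|$ and trading it against the gradient term produces the coefficient $\tfrac{\eta}{2}L_W^2\|I_M-\overbar{A}\|^2$. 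Taking expectations then uses the {\fedavg} regularity assumptions of \citep{wang2020} (unbiased stochastic gradients with bounded variance), so that the SGD-noise cross terms vanish in expectation and the variance terms are absorbed.

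After collecting terms, the per-round bound takes the form
\[
\Big(\tfrac{1}{2\eta} - \tfrac{L_W}{2} - \tfrac{\eta}{2}L_W^2\|I_M-\overbar{A}\|^2\Big)\,\mathbb{E}\big[\|W^{t+1}-W^t\|^2\big] \le \mathbb{E}[F(W^t)] - \mathbb{E}[F(W^{t+1})].
\]
I would then sum over $t = 1,\dots,T$ so that the right-hand side telescopes to $F(W^1) - F(W^{T})$, recognize $\|W^{t+1}-W^t\|^2 = G_\mathbb{W}(W^t)$, and divide by $TM$ together with the (positive) bracketed coefficient to reach the claimed bound. A step-size condition keeping $\tfrac{1}{2\eta} - \tfrac{L_W}{2} - \tfrac{\eta}{2}L_W^2\|I_M-\overbar{A}\|^2 > 0$ is required for the division to be valid, and I would state it explicitly; note also that $\overbar{A} = \tfrac{1}{M}\1_M^\top\1_M$ recovers the {\fedavg} regime.

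The main obstacle I anticipate is the matrix-valued coupling between the mixing operator and the gradient: unlike scalar {\fedavg}, the aggregation acts on the right of $W^t$, so the consensus term $W^t(\overbar{A}-I_M)$ does not commute with the gradient and must be bounded through the operator norm $\|I_M-\overbar{A}\|$ rather than cancelled. Verifying that this is the \emph{only} place the spectral gap of $\overbar{A}$ appears, and that row-stochasticity of $\overbar{A}$ (so a consensus matrix is a fixed point) keeps the negative gradient term from being weakened, is the delicate part; the remaining telescoping and expectation bookkeeping are routine once the decomposition is in place.
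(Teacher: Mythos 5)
Your overall skeleton --- the descent lemma for the $L_W$-smooth objective, Young's inequality to isolate the aggregation error, telescoping over $t$, and a positivity condition on the step size --- matches the paper's proof. The genuine gap is in how the mixing operator enters. You decompose the round increment as $W^{t+1}-W^t = -\eta\,\nabla F(W^t)\,\overbar{A} + W^t(\overbar{A}-I_M)$ and propose to control the second piece via $\|W^t(\overbar{A}-I_M)\|\le\|W^t\|\,\|I_M-\overbar{A}\|$. That bound is in terms of the \emph{magnitude of the iterate} $\|W^t\|$, not of $L_W\|W^{t+1}-W^t\|$, so Young's inequality applied to this cross term leaves an additive residual proportional to $\|W^t\|^2\|I_M-\overbar{A}\|^2$ per round. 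This residual neither telescopes nor can be absorbed into the coefficient of $\|W^{t+1}-W^t\|^2$, so your route does not produce the stated factor $\tfrac{\eta}{2}L_W^2\|I_M-\overbar{A}\|^2$; indeed it is unclear where $L_W^2$ would come from at all in your accounting, since no gradient appears in the term you are bounding.

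The paper avoids this by formulating {\flt} as minimizing $\sum_m p_m F_m(w_i)$ subject to the consensus constraint $w_i=\sum_j\overbar{A}_{i,j}w_j$, so that the iterate difference is exactly $W^t-W^{t-1}=-\eta\,\overbar{A}\,\nabla_W F(W^{t-1})$ and the term $W^{t-1}(\overbar{A}-I_M)$ you carry around is identically zero. The only error term is then $(I_M-\overbar{A})\nabla_W F(W^{t-1})$ --- the mixing deficit applied to the \emph{gradient}, not the iterate. The paper's key lemma rewrites $\langle\nabla_W F(W^{t-1}),W^t-W^{t-1}\rangle$ as $-\tfrac1\eta\|W^t-W^{t-1}\|^2$ plus a cross term with $(I_M-\overbar{A})\nabla_W F(W^{t-1})$; Young's inequality with $c=1/\eta$ and the Lipschitz assumption then give $\tfrac{\eta}{2}\|(I_M-\overbar{A})\nabla_W F(W^{t-1})\|^2\le\tfrac{\eta}{2}L_W^2\|I_M-\overbar{A}\|^2\|W^t-W^{t-1}\|^2$, which is where the stated coefficient actually arises. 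A second, related soft spot: for a general row-stochastic, non-symmetric $\overbar{A}$, the quantity $\langle\nabla F(W^t),\nabla F(W^t)\overbar{A}\rangle$ that you rely on to ``drive the descent'' need not be nonnegative; the paper's rearrangement sidesteps this because the entire descent is carried by the $-\tfrac1\eta\|W^t-W^{t-1}\|^2$ term. To repair your argument you should either impose the consensus constraint (killing the $W^t(\overbar{A}-I_M)$ term) or re-route the $(I_M-\overbar{A})$ factor onto the gradient before applying Young. (Minor: the theorem is proved for full gradients with a single local epoch as the worst case, so the stochastic-noise and multi-epoch bookkeeping you invoke is not needed for this statement.)
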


This theorem proves that akin to {\fedavg}, the convergence rate of {\flt} for a regular cost function is $\mathcal{O}(\frac{1}{T})$ for $T$ communication rounds. In Appendix~\ref{ssec:eval} besides the detailed proof of this theorem, we also provide a more elaborate convergence analysis for {\flt} where extra hyper-parameters such as the number of local epochs $E$ are incorporated.

%---------------------------------------------------
\section{Evaluation}
\label{sec:eval}
\vspace{-0.0cm}
%--------------------------------------------------
We now present empirical evidence on the impact of the proposed approach, {\flt}. We first describe our experimental setup covering a suite of synthetic and realistic (large-scale) non-IID scenarios. We then compare the performance (accuracy and fairness) of {\flt} against existing state-of-the-art baselines.

%---------------------------------------------------
\subsection{Experimental Setup}
\label{ssec:exp}
\vspace{-0.0cm}
%---------------------------------------------------
\textbf{Datasets.} We opt for image classification task as our downstream application of federated learning. For performance evaluation, we employ four different datasets: i) MNIST \cite{lecun1998mnist}, ii) CIFAR10 \cite{krizhevsky2014cifar}, iii) Federated EMNIST (FEMNIST) of LEAF \cite{caldas2018leaf} which is made out of Extended MNIST (EMNIST) \cite{cohen2017emnist}, iv) and our newly designed \emph{Structured Non-IID FEMNIST}. The latter is a resampled version of EMNIST to assess the performance in structured non-IID scenarios; see the supplementary materials for more detailed description of this new dataset. For experiments on MNIST and CIFAR10, we respectively use $60,000$ and $50,000$ samples for training, and $10,000$ for testing. MNIST and CIFAR10 contain $10$ classes of handwritten digits and objects with $6000$ and $5000$ samples per class, respectively. FEMNIST is a standard federated learning image classification dataset with $805,263$ samples that can accommodate up to $3550$ clients. Based upon EMNIST and similar to FEMNIST, we build a new dataset with (more pronounced) structured non-IID conditions and call it Structured Non-IID FEMNIST. To do so, we consider the ``balanced'' dataset of EMNIST, containing $131,600$ samples on $47$ classes. We use $112,800$ for training ($2400$ samples per class) and the remainder $18,800$ for testing. Depending on the scenarios explained later on, we partition the data into $C = 5$ or $10$ clusters. For experiments on FEMNIST, there is no predefined partitioning (or clustering) and we follow the standard definitions of LEAF \cite{caldas2018leaf}.

\textbf{Encoders.} To investigate the impact of encoder initialization, we also make use of three other datasets: a subset of CIFAR100 \cite{Krizhevsky09learningmultiple} (without any overlap with $10$ classes of CIFAR10), a $20$-class dataset composed of CIFAR10 and $10$ classes of CIFAR100 (we refer to this as ``CIFAR20''), and Fashion MNIST \cite{xiao2017fashion}. More concretely, we evaluate the performance of the proposed method in two encoder scenarios: \vspace{-0.0cm}
\begin{itemize}
\item[i)] \textbf{Enc1:} the encoder provided to the clients is pretrained on a large set of targets that covers the client target classes. This case is hoping for a holistic encoder on the server side. More specifically, for federated learning on MNIST we have used an encoder pretrained on EMNIST, and for CIFAR10 we use CIFAR20 introduced earlier.
\item[ii)] \textbf{Enc2:} the encoder is pretrained on a totally different dataset, and an initial fine-tuning per client would be required. This case demonstrates that lacking the holistic encoder (Enc1) is not a bottleneck for {\flt}. Note that in this scenario the encoder (ConvAE) is fine-tuned (also pre-trained) in an \emph{unsupervised} way and independent of the downstream federated learning task for $J$ more epochs based on a mean-squared-error (MSE) loss. In this case, for federated learning on MNIST we use an encoder pretrained on Fashion MNIST, and for CIFAR10 we use CIFAR100, which has no overlap with CIFAR10. For experiments on EMNIST/FEMNIST and Structured Non-IID FEMNIST, we only considered Enc2 pretrained on Fashion MNIST.  
\end{itemize}

\textbf{Network parameters.} We consider two model architectures for local client training, a multi-layer perceptron (MLP), and a convolutional neural network (CNN). We use an MLP with ReLU activation, and a single hidden layer of size $200$. We use the CNN of LEAF \cite{caldas2018leaf} with $2$ convolutional layers followed by $2$ fully connected layers. The encoder section of our frozen ConvAE has $2$ convolutional layers followed by a single fully-connected layer. Notice that the encoder is as simple as the local client model. See the supplementary materials for more details on model architectures. We set the number of local epochs to $E = 5$, and the total communication rounds to $T = 100$, unless otherwise mentioned. The local training is a mini-batch SGD with batch size of $10$ and learning rate $\eta = 0.01$. For {\flt}, the size of the latent embedding is $e = 128$ and $k$ in \texttt{kMEANS} is set to $5$. Even though on the client side $k$ can be adjusted according to the number of client classes. The number of fine-tuning epochs for Enc2 is set to $J = 5$, and $\gamma = 1$. The client participation fraction $\rho = 20\%$, unless otherwise mentioned. For hierarchical clustering, we use the Ward's method for linkage creation \cite{mullner2011modern}.

%---------------------------------------------------
\subsection{Evaluation Scenarios}
\label{ssec:eval_scenarios}
\vspace{-0.0cm}
%---------------------------------------------------
According to \cite{MAL-083, mcmahan2017communication}, there are several possible sources of non-IIDness in client data distributions. Among those, label distribution skew, or so-called ``pathological'' partitioning is the most commonly adopted approach in literature. In this case, different clients will have different class labels, which together with quantity skew (varying number of samples across clients) leads to most destructive impact on {\fedavg} \cite{hsu2019measuring}. We build the following scenarios upon these angles.

\textbf{Scenario 1 [MNIST, CIFAR10]:} We consider a network of $M = |\mathcal{S}| = 100$ clients clustered into $C = 5$ clusters. The training data samples will be evenly distributed among these clients, $600$ samples each ($500$ for CIFAR10), and the clients in each cluster will have samples only from two distinct classes. For instance, for MNIST, clients in cluster 1 ($\mathcal{C}_1$) will have only samples from digits ``\texttt{0}'' and ``\texttt{1}'', and those in $\mathcal{C}_2$ from  ``\texttt{4}'' and ``\texttt{7}'' without overlap with $\mathcal{C}_1$.

\textbf{Scenario2 [MNIST, CIFAR10]:} This scenario is the same as scenario 1, except that the clients in two different clusters can have 1 similar label/class. As an example, for MNIST, clients in cluster 1 ($\mathcal{C}_1$) will have only samples from digits ``\texttt{0}'', ``\texttt{1}'', and ``\texttt{2}'', and those in $\mathcal{C}_2$ can have samples from ``\texttt{2}'', ``\texttt{3}'' and ``\texttt{7}''. This is to investigate the performance in less extreme non-IID conditions.

\textbf{Scenario 3 [FEMNIST]:} We import the standard FEMNIST dataset of LEAF and construct a network of $200$ clients according to train and test data distributions defined in \cite{caldas2018leaf}. We run our experiments for a total of $T = 100$, $1000$ and $1500$ communication rounds. There are no predefined (or structured) clusters in FEMNIST and it is up to the federated learning method to form clusters, if need be. Here we consider both cases in Algorithm~\ref{alg:flt}, and assess the performance of {\flt} in the generic form (no clusters, case I) as well as with $C = 2$, $3$, $5$ and $7$ clusters (case II). 
	
\textbf{Scenario 4 [Structured Non-IID FEMNIST]:} As mentioned earlier, we introduced this dataset with the purpose of imposing (more extreme) structured non-IIDness in FEMNIST. To this aim, we impose label distribution skew (across clusters) as well as quantity skew following a power law for the number of samples per client in each cluster, akin to \cite{li2018federated}. We consider $C = 10$ clusters, each containing $5$ distinct character classes (total of $12,000$ data samples per cluster), except the last one containing $2$ classes ($4800$ samples), resulting in a total of $47$ classes and $112,800$ samples. We also consider even a larger network than Scenario 3 with $M = 2400$ clients ($240$ clients per cluster) for MLP, and $M = 600$ clients ($100$ clients per cluster) for CNN. Notably, as a result of our random sampling strategy, the clients in each cluster will have a random subset of the labels assigned to that cluster.  
	
%---------------------------------------------------
\subsection{Baselines and Competitors}
\label{ssec:baselines}
\vspace{-0.0cm}
%---------------------------------------------------
\textbf{Baselines.} We consider 5 baselines as described in the following. i) {\fedavg} \cite{mcmahan2017communication} where a \emph{single global model} is trained for the whole network. ii) \texttt{local} where each client trains its own model with its own local data. iii) \texttt{PCA+kM+HC} Inspired by \cite{Dennis2020hetero} (focused on federated clustering), this method applies linear principle component analysis (PCA) followed by \texttt{kMEANS} (at client-side) and hierarchical clustering (\texttt{HC}) at the server side. The goal is to illustrate the impact of the nonlinear encoder (convAE) and manifold learning (\texttt{UMAP}) components of {\flt} for discovering task-relatedness. iv, v) We also compare our performance with two of the most recent state-of-the-art clustered federated learning approaches called \texttt{IFCA} \cite{NEURIPS2020_e32cc80b} and \texttt{FedSEM} \cite{xie2020multi}. Notably, \texttt{FedSEM} already outperforms other recent baselines such as {\fedprox} \cite{li2018federated} and \texttt{CFL} \cite{sattler2020clustered}, and thus, has been selected as the outstanding approach. For the sake of reference though, we also report the result of \texttt{CFL} in Scenario 3. \texttt{FedSEM} constructs multiple clusters each building its own local cluster-level model. For each cluster, a virtual cluster center is defined and its parameters are updated in an iterative fashion. The key idea behind cluster formation is measuring the distance between clients and the virtual cluster center model parameters. \texttt{IFCA} tries to construct multiple clusters iteratively by alternating between cluster identification estimation and loss function minimization. It starts with initializing model parameters for cluster centers which are then broadcast to the randomly participating clients per communication round (a costly communication overhead). The clients estimate their cluster identity by finding the model that returns the lowest loss, and send their cluster identity along with the updated model. For both methods: i) the number of clusters has to be known apriori; ii) a good initialization of cluster center model parameters is key for convergence; iii) they require several communication rounds before clusters are formed. We will demonstrate in the next subsections that these characteristics can lead to slow convergence and sub-optimal performance in \emph{structured non-IID} scenarios.

\textbf{Fairness in federated learning.} Recently, \emph{fairness} in performance has become an important concern in federated learning. In this context, being ``fair'' is to avoid disproportionately advantaging or disadvantaging some of the clients. Among several interesting approaches to fairness, two recent ones stand out in the federated learning literature: i) best worst-case performance \cite{hashimoto2018fairness, pmlr-v97-mohri19a}, and ii) least variance across clients \cite{Li2020Fair}. We focus on the latter and report the variance of model accuracies across clients as a measure of fairness.

%---------------------------------------------------
\subsection{Evaluation Results for Scenarios 1 and 2}
\label{ssec:scenario_1n2}
\vspace{-0.0cm}
%---------------------------------------------------
% 
\begin{table}[t]
	\small
	\caption{Test accuracies ($\% \pm$ std. error) for Scenario 1.}
	\vspace{-0.2cm}
	\label{tb:scenario_1}
	\centering
	{\tabcolsep=0pt\def\arraystretch{1.0}
		\begin{tabularx}{240pt}{l *4{>{\Centering}X}}
			\toprule
			& \multicolumn{2}{c}{MNIST}   & \multicolumn{2}{c}{CIFAR10} 
			\tabularnewline \cmidrule(lr){2-3}\cmidrule(l){4-5}
			Method								& acc. 							&var. 	&acc.  	& var. \tabularnewline 
			\midrule
			{\fedavg}  					&82.73\scriptsize{$\pm$0.38} 	&44.65 	&33.29\scriptsize{$\pm$0.47} 		&108.20 \tabularnewline
			\texttt{Local}  					&97.41\scriptsize{$\pm$0.16}  	&3.44 		&79.81\scriptsize{$\pm$0.40} 	&115.35 \tabularnewline
			\texttt{PCA+kM+HC}  				&83.82\scriptsize{$\pm$0.37}  	&41.13 		&75.35\scriptsize{$\pm$0.40} 	&589.6 \tabularnewline
			{\fedsem}  					&98.01\scriptsize{$\pm$0.14} 	&2.12 	&78.23\scriptsize{$\pm$0.41} 		&695.01 \tabularnewline
			\rowcolor{LightCyan}
			{\flt} (Enc1)  				&97.98\scriptsize{$\pm$0.14} 	&2.17 	&87.11\scriptsize{$\pm$0.33} 		&88.80 \tabularnewline
			\rowcolor{LightCyan}
			{\flt} (Enc2) 				&97.97\scriptsize{$\pm$0.14} 	&2.17 	&87.17\scriptsize{$\pm$0.33}  		&84.62 \tabularnewline
			\bottomrule
	\end{tabularx}}\vspace{-0.0cm}
\vspace{-.0in}
\end{table}
\begin{table}[t]
	\small
	\caption{Test accuracies ($\% \pm$ std. error) for Scenario 2.}
    \vspace{-0.2cm}
	\label{tb:scenario_2}
	\centering
	{\tabcolsep=0pt\def\arraystretch{1.0}
		\begin{tabularx}{240pt}{l *4{>{\Centering}X}}
			\toprule
			& \multicolumn{2}{c}{MNIST}   & \multicolumn{2}{c}{CIFAR10} 
			\tabularnewline \cmidrule(lr){2-3}\cmidrule(l){4-5}
			Method 								& acc. & var. & acc.  & var. \tabularnewline 
			\midrule
			{\fedavg}  					&86.08\scriptsize{$\pm$0.35}    &17.55      & 52.77\scriptsize{$\pm$0.50}  &19.18 \tabularnewline
			\texttt{Local}  					&96.58\scriptsize{$\pm$0.17}    &1.81       &70.44\scriptsize{$\pm$0.46}   &72.29 \tabularnewline
			\texttt{PCA+kM+HC}  				&85.45\scriptsize{$\pm$0.35}    &18.42 		&61.75\scriptsize{$\pm$0.49}   &312.22 \tabularnewline
			{\fedsem}						&94.14\scriptsize{$\pm$0.23}    &1.20       &73.26\scriptsize{$\pm$0.44}   &339.39 \tabularnewline
			\rowcolor{LightCyan}
			{\flt} (Enc1)  				& 97.37\scriptsize{$\pm$0.16} &1.07 & 79.971\scriptsize{$\pm$0.40} &146.84 \tabularnewline
			\rowcolor{LightCyan}
			{\flt} (Enc2) 				& 97.37\scriptsize{$\pm$0.16} &1.07 & 80.00\scriptsize{$\pm$0.40} &78.66 \tabularnewline
			\bottomrule
	\end{tabularx}}
\vspace{-.0in}
\end{table}

Average model accuracies on \emph{test} data (and their standard errors) for Scenarios 1 and 2 after $T = 100$ communication rounds are shown in Tables~\ref{tb:scenario_1} and \ref{tb:scenario_2}. These are accompanied by the variance of model accuracies across clients which is serving as our fairness metric \cite{Li2020Fair}. For MNIST, the number of local epochs is set to $E = 1$ and for CIFAR10 is set to $E = 5$. Note that irrespective of the choice of encoder (Enc1 or Enc2), the proposed method ({\flt}) achieves roughly the same performance in terms of accuracy in both scenarios. This highlights that our method is reasonably robust against the choice of the encoder and the fine-tuning idea (starting from a totally different dataset and finetuning only for $J=5$ epochs) for Enc2 is functioning. As can be seen in both tables, even with $E = 1$, the differences between {\flt} and {\fedsem} are relatively smaller on MNIST but this discrepancy grows for CIFAR10. In summary, on MNIST and CIFAR10, we improve upon the state-of-the-art approach {\fedsem} by about $3\%$ and $9\%$ respectively, and way beyond that for the case of {\fedavg}, \texttt{Local} and \texttt{PCA+kM+HC}. On top of accuracy, {\flt} returns the least test accuracy variance (best fairness) for almost all settings in the two scenarios. 

%The \texttt{Local} models are overfitting leading to extremely high accuracy in training (illustrated on the convergence graphs in supplementary Section~\ref{sec:more_eval}) and not-so impressive in test results. 

There are two reasons behind achieving reasonable performance by \texttt{Local} models: i) in these two scenarios clients have a reasonably high number of samples ($600$ for MNIST, $500$ for CIFAR10); ii) the target classes in different clusters are almost independent. Notice that extreme non-IIDness (total class independence across clusters) in Scenario 1 helps \texttt{Local} models to score better than in Scenario 2 where some correlation between labels is allowed. A counter-argument applies to our method {\flt} because hard thresholding (mapping cluster membership to $0$'s and $1$'s) for $\Gamma$ in Algorithm~\ref{alg:formcluster} seems to be limiting the performance in handling inter-cluster dependencies. \texttt{Local} models do not illustrate competitive performance. Therefore, we omit them in the following experiments.   

%---------------------------------------------------
\subsection{Evaluation Results for Scenarios 3}
\label{ssec:scenario_3}
\vspace{-0.0cm}
%---------------------------------------------------
\begin{figure*}[t!]
	\centering
  	\hfill
	\begin{subfigure}[b]{0.32\textwidth}
		\centering
    	\includegraphics[trim={2.59cm 2.2cm 1.82cm 2.2cm},clip,width=\textwidth]{./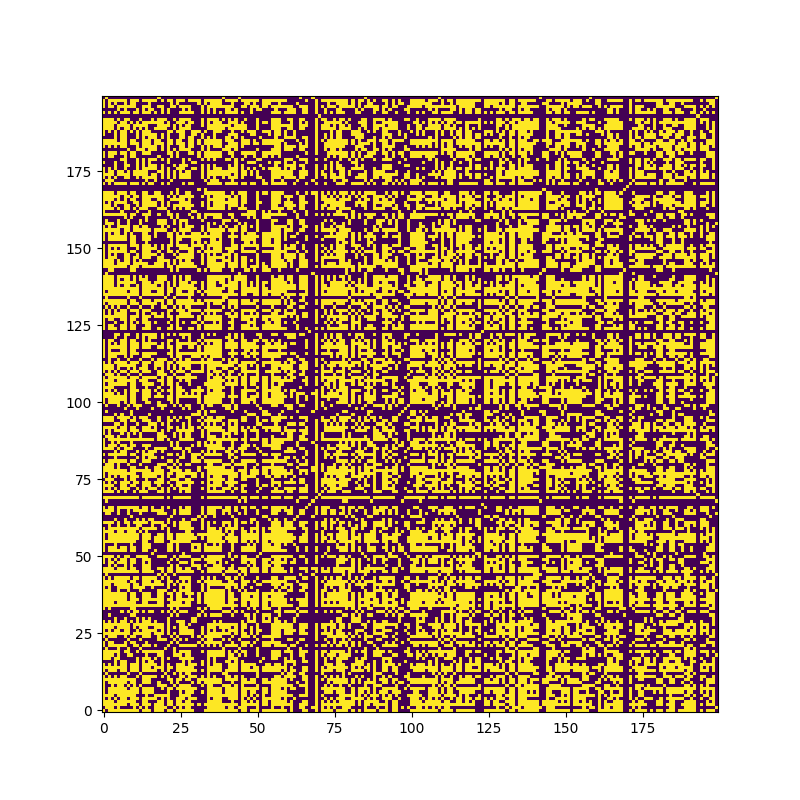}
    	\caption{Raw adjacency matrix.}
    \end{subfigure}
	\hfill
	\begin{subfigure}[b]{0.32\textwidth}
		\centering
	    \includegraphics[trim={1.62cm 1.9cm 1.62cm 1.87cm},clip,width=\textwidth]{./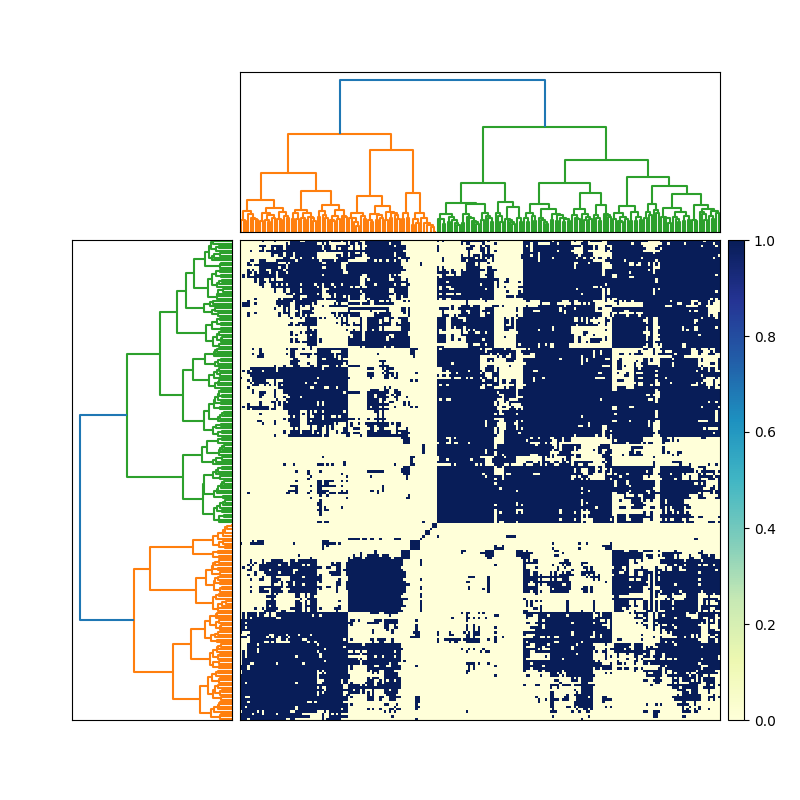}
    	\caption{Reordered with hierarchical clustering. }
    \end{subfigure}
    \hfill
    \begin{subfigure}[b]{0.32\textwidth}
		\centering
	    \includegraphics[trim={3.63cm 1.33cm 3.0cm 1.33cm},clip,width=\textwidth]{./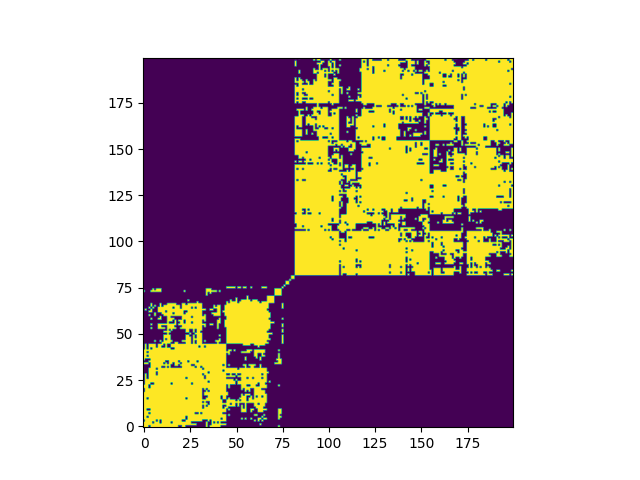}
    	\caption{Flattened to $C=2$ clusters.}
    \end{subfigure}
	\vspace{-0.2cm}
	\caption{Hierarchical clustering (\texttt{HC}) is used to reorder clients relatedness graph ($\tilde{A} \rightarrow \tilde{A}^r$) and extract near-optimal block-diagonal structures for disjoint cluster formation. In (b), the dendrogram highlights the client dependencies leading to $C=2$ disjoint clusters.}
	\label{fig:hierarchical}
	\vspace{-0.4cm}
\end{figure*}

Predefined data distribution of FEMNIST in LEAF \cite{caldas2018leaf} and its adoption by other state-of-the-art methods helped us to also benchmark our approach against {\ifca} \cite{NEURIPS2020_e32cc80b}. Here, we present performance results based on both MLP and CNN networks for $M = 200$ clients; a setting commonly adopted in literature. For {\flt}, we only present the more challenging Enc2 case where the encoder is pretrained on Fashion MNIST, and in a \emph{one-off process} each client fine-tunes the encoder (for only $J=5$ epochs). The authors of \cite{NEURIPS2020_e32cc80b} employ a slightly modified data sampling strategy; however, we use the standard data distribution of FEMNIST as in \cite{caldas2018leaf}. They also run for a total of $T = 6000$ communication rounds to reach the nominal performance, $2000$ of which is initialization with {\fedavg} for ``weight sharing'' for a smaller client participation rate ($\rho = 3\%$). For the sake of a fair comparison, we run $500$ rounds of initialization (with {\fedavg}) followed by another $1000$ rounds of {\ifca} itself, i.e. a total of $1500$ rounds. All the MLP experiments are run for $T = 1000$ communication rounds, and those for CNN are run for only $T= 100$ rounds, except for {\ifca} which is again run for $1500$ rounds for both MLP and CNN. Remember that FEMNIST does not introduce any predefined cluster structure, and thus, non-IIDness in the structured form. As such, this is up to the methodology to define clusters. Both {\ifca} and {\fedsem} must define clusters ($C > 1$) or with $C = 1$ they would degenerate to {\fedavg}; however, thanks to $\texttt{FCR}$ in Algorithm~\ref{alg:formcluster}, this is not the case for {\flt}. We adopt the setting leading to the best performance reported by both {\ifca} and {\fedsem} with $C=3$ and $C=5$ clusters, respectively. For {\flt} we present the results for both cases described in Section~\ref{sec:clustFL}: i) where the full client relatedness matrix will be used, and ii) where we use hierarchical clustering (\texttt{HC}) to specifically extract disjoint clusters (only if need be, as a special case). This is shown in Fig.~\ref{fig:hierarchical} where (a) shows the raw client relatedness matrix, (b) illustrates how \texttt{HC} would reorder this to extract cluster level dependencies (dendrogram), and (c) shows the flattened version with $C=2$ disjoint clusters.  

The \emph{test} accuracies and fairness measures are summarized in Table~\ref{tb:scenario_3}. By using the full client relatedness matrix, {\flt} beats all the competitors (by $+2\%$, $+12\%$, $+1.5\%$ for \texttt{PCA+kM+HC}, {\ifca}, {\fedsem}) for MLP and (by $+2\%$, $+0.5\%$, $+2\%$ for \texttt{PCA+kM+HC}, {\ifca}, {\fedsem}) in case of CNN local models. As also reported in \cite{xie2020multi}, {\fedsem} already beats {\cfl}, so does {\flt}. Notably, {\flt} significantly outperform {\ifca} in the MLP setting and marginally outperform in the case of CNN. However, in case of CNN, all models except {\ifca} are run for only $100$ communication rounds. In practice, it took {\ifca} $1500$ communication rounds to reach a comparable performance regime. For this reason, we omit {\ifca} in our next experiments. Note that \texttt{PCA+kM+HC} (inspired by the line of thought in \cite{Dennis2020hetero} and embedded in an end-to-end federated learning setting) also performs in par with the best methods and considerably better than {\ifca} in the MLP setting. For the sake of fair comparison, hierarchical clustering (\texttt{HC}) with $C=3$ clusters is applied here. The convergence graphs of average \emph{test} accuracies are illustrated in Figs.~\ref{fig:scenario_3} and ~\ref{fig:scenario_3_cnn} where {\flt} is the fastest in terms of convergence. We also investigated the impact of creating disjoint clusters with {\flt} (case II in Algorithm~\ref{alg:flt}) in this setting. Introducing clusters ($C=2$, $3$ and $5$) in this dataset seems to be slightly degrading the performance of {\flt}, even though it still remains to be in par with the best performing models. See the supplementary materials for more detailed results. Reflecting on the relatively smaller performance margin in this scenario, we argue that FEMNIST may not have a clear cluster \emph{structure}, and thus, cluster-based methods might not offer a significant gain. This also concurs with that {\flt} employing the full adjacency matrix slightly outperforms its clustered variants. This was the main motivation behind designing Scenario 4 to assess the potential of these algorithms in a more challenging large-scale setting: Structured Non-IID FEMNIST. 
\begin{table}[t!]
	\small
	\caption{Test accuracies ($\% \pm$ std. error) for Scenario 3.}
	\vspace{-0.2cm}
	\label{tb:scenario_3}
	\centering
	{\tabcolsep=0pt\def\arraystretch{1.0}
		\begin{tabularx}{240pt}{l *4{>{\Centering}X}}
			\toprule
			& \multicolumn{2}{c}{MLP}   & \multicolumn{2}{c}{CNN} 
			\tabularnewline \cmidrule(lr){2-3}\cmidrule(l){4-5}
			Method 								& acc. & var. & acc.  & var. \tabularnewline 
			\midrule
			{\fedavg}  				&  72.76\scriptsize{$\pm$0.76} & 202.61 & 81.64\scriptsize{$\pm$0.66}  &147.03 \tabularnewline
			{\texttt{PCA+kM+HC}\tiny{($C{=}3$)}}				  	&71.94\scriptsize{$\pm$0.77} & 194.75 &80.07\scriptsize{$\pm$0.68} &160.32 			\tabularnewline
			{\ifca} \cite{NEURIPS2020_e32cc80b}					&  61.24\scriptsize{$\pm$0.84} & 176.38 & 81.47\scriptsize{$\pm$0.66}  &118.71\tabularnewline
			{\cfl} \cite{sattler2020clustered} &65.35\scriptsize{$\pm$0.81} & 180.23 & 76.68\scriptsize{$\pm$0.73} & 132.42
			\tabularnewline
			{\fedsem} \cite{xie2020multi}				&  72.45\scriptsize{$\pm$0.76} & 185.96 & 79.99\scriptsize{$\pm$0.68}  &156.27\tabularnewline
			\rowcolor{LightCyan}
			{\flt}		 			&  74.11\scriptsize{$\pm$0.74} & 171.31 & 82.14\scriptsize{$\pm$0.65}  &145.03\tabularnewline
			\rowcolor{LightCyan}
			{\flt}\tiny{($C{=}2$)}  &   72.83\scriptsize{$\pm$0.76} & 175.55 & 80.53\scriptsize{$\pm$0.68}  &150.26\tabularnewline
			\rowcolor{LightCyan}
			{\flt}\tiny{($C{=}3$)}	&  72.61\scriptsize{$\pm$0.76} & 184.59 & 79.72\scriptsize{$\pm$0.69}  &170.10\tabularnewline
			\rowcolor{LightCyan}
			{\flt}\tiny{($C{=}5$)}	&  72.05\scriptsize{$\pm$0.77} & 180.32 & 78.71\scriptsize{$\pm$0.70}  &169.22\tabularnewline
% 			{\flt}\tiny{($C{=}7$)}	&  71.86\scriptsize{$\pm$0.26} & 187 & 78.70\scriptsize{$\pm$0.24}  &161.45\tabularnewline
			\bottomrule
	\end{tabularx}}
\vspace{-.0in}
\end{table}
\begin{table}[t!]
	\small
	\captionof{table}{Test accuracies ($\% \pm$ std. error) for Scenario 4.}
	\vspace{-0.2cm}
	\label{tb:scenario_4}
	\centering
	{\tabcolsep=0pt\def\arraystretch{1.0}
		\begin{tabularx}{240pt}{l *4{>{\Centering}X}}
%			& \multicolumn{2}{c}{EMNIST} 
			%\tabularnewline \cmidrule(lr){2-3}
			\toprule
			& \multicolumn{2}{c}{MLP}   & \multicolumn{2}{c}{CNN} 
			\tabularnewline \cmidrule(lr){2-3}\cmidrule(l){4-5}
			Method 								& acc. & var. & acc.  & var. \tabularnewline 
			\midrule
			{\fedavg}  				&  46.50\scriptsize{$\pm$0.36} & 100.27 &53.58\scriptsize{$\pm$0.36} &883.47
			\tabularnewline
			%\texttt{Local}  				&  71.67\scriptsize{$\pm$0.33} & 131.97 \tabularnewline
			{\texttt{PCA+kM+HC}}				  	&48.75\scriptsize{$\pm$0.36} &467.92  &59.87\scriptsize{$\pm$0.36} &912.39			\tabularnewline
			{\fedsem}  \cite{xie2020multi}				  	&  43.53\scriptsize{$\pm$0.36} & 406.60 &76.22\scriptsize{$\pm$0.29} &1264.38
			\tabularnewline
			\rowcolor{LightCyan}
			{\flt} 		 			&  86.51\scriptsize{$\pm$0.24} & 207.60 &93.69\scriptsize{$\pm$0.10} &98.65 \tabularnewline
			\bottomrule
    	\end{tabularx}}
\vspace{-.0in}
\end{table}
\begin{figure*}[t!]
\centering
    \begin{subfigure}{.49\linewidth}
    	\centering
    	\includegraphics[width=0.99\textwidth]{./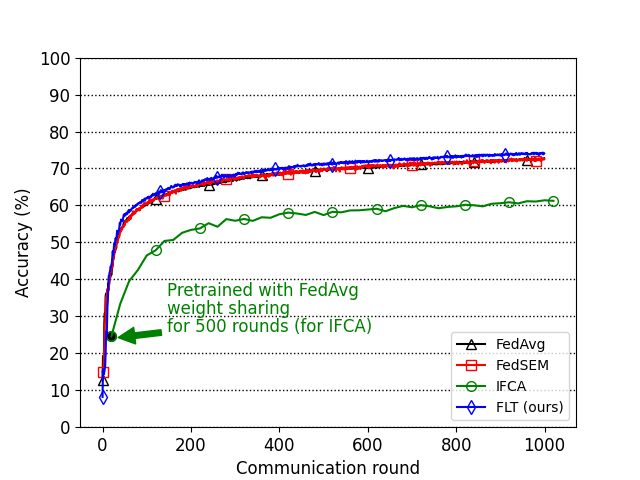}
    	\vspace{-0.2cm}
    	\caption{Test accuracies for MLP.}
    	\label{fig:scenario_3}
    \end{subfigure}
\hfill
    \begin{subfigure}{.49\linewidth}
    	\centering
    	\includegraphics[width=0.99\textwidth]{./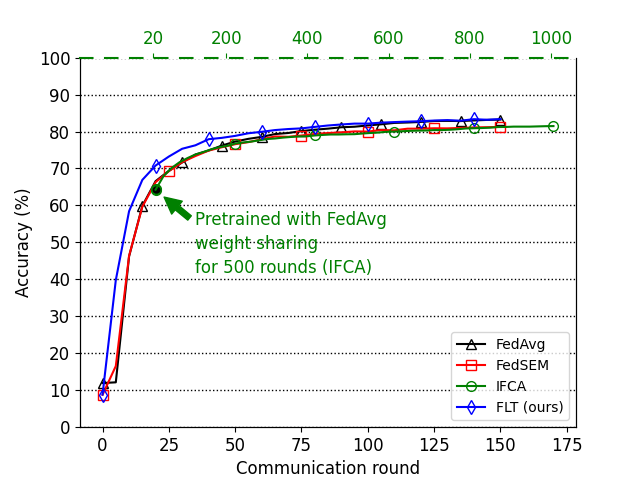}
    	\vspace{-0.2cm}
    	\caption{Test accuracies for CNN. }
    	\label{fig:scenario_3_cnn}
    \end{subfigure}
    \vspace{-0.2cm}
    \caption{Convergence graph of \emph{test} accuracies for Scenario 3, FEMNIST, $M=200$ (left: MLP, right, CNN). On the right for CNN, note the different range of communication rounds on the top horizontal axis associated with {\ifca}.}
\end{figure*}

%---------------------------------------------------
\subsection{Evaluation Results for Scenarios 4}
\label{ssec:scenario_4}
\vspace{-0.0cm}
%---------------------------------------------------
As explained earlier, Scenario 4 presents a large-scale federated learning setting with structured non-IIDness involving both quantity and label distribution skews. For {\flt}, we only present the more challenging Enc2 case where the encoder is pretrained on Fashion MNIST, and in a \emph{one-off process} each client fine-tunes the encoder (for only $J = 5$ epochs). Convergence graphs of average \emph{test} accuracies are shown in Fig.~\ref{fig:scenario_4} and \ref{fig:scenario_4_cnn}, for MLP and CNN networks, respectively. The \emph{test} accuracies (and their standard error) together with the fairness measure (variance across clients) for at the last communication round $T = 100$ are  summarized in Table~\ref{tb:scenario_4}. Interestingly, the state-of-the-art competitor, {\fedsem}, suffers in the MLP scenario and roughly scores as good as the barebone {\fedavg}, which is not made to cope with structured non-IID scenarios. The performance of {\fedsem} is noticeably improved in the CNN setting, but accordingly {\flt} is boosted as well, reaching $+93\%$ \emph{test} accuracy in this challenging scenario. Here, we beat {\fedsem} by $+40\%$ and $+17\%$ in MLP and CNN scenarios, respectively. The main reason behind this downgrade in performance of {\fedsem} (compared to Scenario 3) is the struggle to discover and form clusters by relying on model parameter comparisons. This is in turn due to two main factors: i) significantly larger number of models (per cluster and across) for this extreme non-IID scenario leads to tremendous heterogeneity in model space and thus considerable increase in complexity of pairwise model comparisons; ii) the problem is exacerbated due to the limited number of samples provided to some clients (down to $20$ samples because of the power law) resulting in lower-quality local model training. An evidence confirming this hypothesis is that these problems are even more pronounced in the case of a simpler local model (MLP), where {\fedsem} falls almost back to {\fedavg}. On the other hand, thanks to $\texttt{FCR}$ on client side, and \texttt{UMAP} and \texttt{HC}) on server side, {\flt} manages to automatically detect $C=10$ clusters (See Appendix~\ref{sec:conv_issues} for more results). Both {\fedavg} and {\fedsem} can benefit from more communication rounds in this scenario. Nonetheless, the main message of this experiment is clear: the one-shot taskonomy-based client relatedness helps {\flt} to converge already in roughly $20$ communication rounds and outperform the other baselines by a significant margin, while also offering a lower variance across clients (best fairness) compared to most competitors.

\begin{figure*}[t!]
\centering
    \begin{subfigure}{.49\linewidth}
    	\centering
    	\includegraphics[width=0.9\textwidth]{./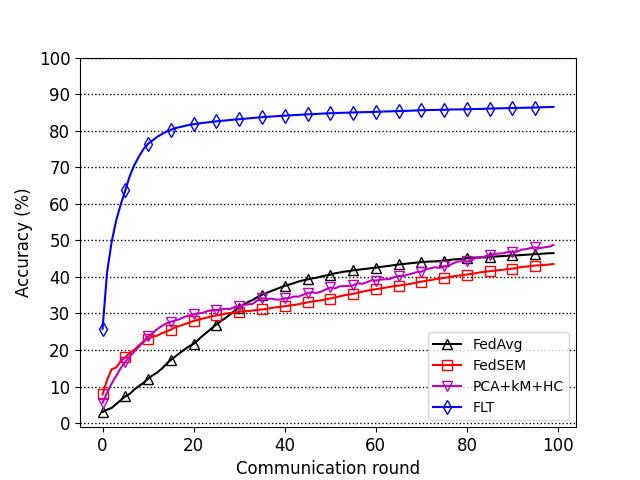}
    	\vspace{-0.2cm}
    	\caption{Test accuracies for MLP, $M = 2400$.}
    	\label{fig:scenario_4}
    \end{subfigure}
\hfill
    \begin{subfigure}{.49\linewidth}
    	\centering
    	\includegraphics[width=0.9\textwidth]{./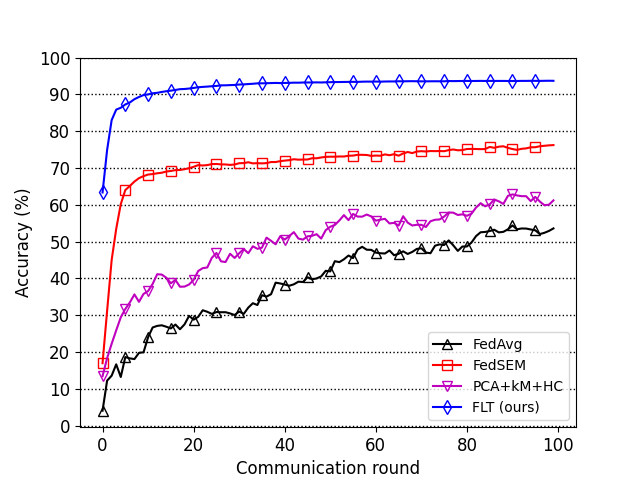}
    	\vspace{-0.2cm}
    	\caption{Test accuracies for CNN, $M = 600$.}
    	\label{fig:scenario_4_cnn}
    \end{subfigure}
    \vspace{-0.2cm}
    \caption{Convergence graph of \emph{test} accuracies for Scenario 4, Structured Non-IID FEMNIST, $C=10$.}
\end{figure*}

%---------------------------------------------------
\subsection{Impact of Hyper-parameter Changes}
\label{ssec:ablation}
\vspace{-0.0cm}
%---------------------------------------------------
We investigate the impact of a few remaining important hyper-parameters on the performance of {\fedsem} and {\flt} with both Enc1 and Enc2 initialization. We ran this experiment on Scenario 2 due to being an intermediate case where label overlap is allowed. Following this scenario, we set $C = 5$, and focus on CIFAR10, the more challenging dataset. The results are summarized in Table~\ref{tb:hyper}. Here, we consider $M = 100$ and $500$, client participation fraction per communication round $\rho = 20\%$ and $50\%$ (denoted as $0.2$ and $0.5$), $E = 5$, $J = 5$ (number of fine-tuning steps in Enc2 initialization), and we report the \emph{test} accuracy after $T = 50$ and $100$ communication rounds. As can be seen in the table, changing the total number of clients $M$ does not change the performance significantly (by looking at the top and bottom halves of the table), whereas increasing the communication rounds $T$ leads to better performance. Besides, for the same $M$, increasing the client participation fraction $\rho$ improves the overall performance of both methods. Furthermore, for the same $M$ and $\rho$, increasing the number of local epochs $E$ results in a superior performance, all of which is in line with the more elaborate version of Theorem\,$2$, in Appendix~\ref{ssec:eval}. Without exception, our method outperforms {\fedsem} by roughly $2\%$ to $8\%$.

%
% \begin{table}[t!]
% 	\small
% 	\captionof{table}{Ablation study of \emph{test} acc. ($\%$) for Scenario 2}
% 	\vspace{-0.1cm}
% 	\label{tb:hyper}
% 	\centering
% 	{\tabcolsep=0pt\def\arraystretch{1.0}
% 		\begin{tabularx}{240pt}{l *4{>{\Centering}X}}\toprule
% 		($M$, $\rho$, $E$, $T$) &  {\fedsem} \cite{xie2020multi}	& {\flt}  \tabularnewline 
% 		\midrule
% 		(100, 0.2, 5, 50)			&71.85\scriptsize{$\pm$0.45}  		&79.45\scriptsize{$\pm$0.40}\tabularnewline
% 		(100, 0.2, 5, 100)			&73.42\scriptsize{$\pm$0.44}  		&79.95\scriptsize{$\pm$0.40}\tabularnewline
% 		(100, 0.5, 5, 50)			&74.29\scriptsize{$\pm$0.44}		&80.02\scriptsize{$\pm$0.40}\tabularnewline
% 		(100, 0.5, 5, 100)			&74.15\scriptsize{$\pm$0.44}		&79.72\scriptsize{$\pm$0.40}\tabularnewline
% 		\midrule
% 		(500, 0.2, 5, 50)			&71.24\scriptsize{$\pm$0.44} 		&80.27\scriptsize{$\pm$0.40}\tabularnewline
% 		(500, 0.2, 5, 100)			&73.21\scriptsize{$\pm$0.45}		&81.77\scriptsize{$\pm$0.39}\tabularnewline
% 		(500, 0.5, 5, 50)			&76.79\scriptsize{$\pm$0.42}		&81.97\scriptsize{$\pm$0.37}\tabularnewline
% 		(500, 0.5, 5, 100)			&76.75\scriptsize{$\pm$0.42}		&81.57\scriptsize{$\pm$0.39}\tabularnewline
% 		\bottomrule
% 		\end{tabularx}}
% \vspace{-.0in}
% \end{table}

\begin{table}[t!]
    \small
	\captionof{table}{Ablation study of \emph{test} acc. ($\% \pm$ std.) for Scn. 2}
	%\vspace{-0.1cm}
	\label{tb:hyper}
	\centering
	{\tabcolsep=0pt\def\arraystretch{1.0}
		\begin{tabularx}{240pt}{l *4{>{\Centering}X}}\toprule
		& \multirow{2}{*}{\fedsem\,\cite{xie2020multi}}   & \multicolumn{2}{c}{\fltours}
			\tabularnewline \cmidrule(l){3-4}
			($M$, $\rho$, $E$, $T$) 								&  & Enc1  & Enc2 \scriptsize{($J=5$)}
		\tabularnewline 
		\midrule
		(100, 0.2, 1, 50)			&64.32\scriptsize{$\pm$0.48}  		  &66.23\scriptsize{$\pm$0.47}  &65.93\scriptsize{$\pm$0.47} \tabularnewline
		(100, 0.2, 5, 50)			&71.85\scriptsize{$\pm$0.45}  		&79.45\scriptsize{$\pm$0.40}  &78.64\scriptsize{$\pm$0.41}\tabularnewline
		(100, 0.2, 1, 100)			&69.33\scriptsize{$\pm$0.46}  		&72.44\scriptsize{$\pm$0.45}  &72.62\scriptsize{$\pm$0.45}\tabularnewline
		(100, 0.2, 5, 100)			&73.42\scriptsize{$\pm$0.44}  		&79.95\scriptsize{$\pm$0.40}  &79.62\scriptsize{$\pm$0.40}\tabularnewline
		(100, 0.5, 1, 50)			&72.94\scriptsize{$\pm$0.44}		&73.78\scriptsize{$\pm$0.44}  &73.17\scriptsize{$\pm$0.44}\tabularnewline
		(100, 0.5, 5, 50)			&74.29\scriptsize{$\pm$0.44}		&80.02\scriptsize{$\pm$0.40}  &79.22\scriptsize{$\pm$0.41}\tabularnewline
		(100, 0.5, 1, 100)			&77.37\scriptsize{$\pm$0.42}		&79.30\scriptsize{$\pm$0.41}  &78.23\scriptsize{$\pm$0.41}\tabularnewline
		(100, 0.5, 5, 100)			&74.15\scriptsize{$\pm$0.44}		&79.72\scriptsize{$\pm$0.40}  &79.45\scriptsize{$\pm$0.40}\tabularnewline
		\midrule
		(500, 0.2, 1, 50)			&64.88\scriptsize{$\pm$0.48} 		&66.69\scriptsize{$\pm$0.47}  &66.74\scriptsize{$\pm$0.47}\tabularnewline
		(500, 0.2, 5, 50)			&71.24\scriptsize{$\pm$0.44} 		&80.27\scriptsize{$\pm$0.40}  &79.77\scriptsize{$\pm$0.40}\tabularnewline
		(500, 0.2, 1, 100)			&70.65\scriptsize{$\pm$0.46}		&72.55\scriptsize{$\pm$0.46}  &72.84\scriptsize{$\pm$0.44}\tabularnewline
		(500, 0.2, 5, 100)			&73.21\scriptsize{$\pm$0.45}		&81.77\scriptsize{$\pm$0.39}  &81.00\scriptsize{$\pm$0.39}\tabularnewline
		(500, 0.5, 1, 50)			&72.26\scriptsize{$\pm$0.45}		&73.80\scriptsize{$\pm$0.44}  &73.88\scriptsize{$\pm$0.44}\tabularnewline
		(500, 0.5, 5, 50)			&76.79\scriptsize{$\pm$0.42}		&81.97\scriptsize{$\pm$0.37}  &81.08\scriptsize{$\pm$0.39}\tabularnewline
		(500, 0.5, 1, 100)			&76.82\scriptsize{$\pm$0.42}		&79.66\scriptsize{$\pm$0.40}  &79.01\scriptsize{$\pm$0.41}\tabularnewline
		(500, 0.5, 5, 100)			&76.75\scriptsize{$\pm$0.42}		&81.57\scriptsize{$\pm$0.39}  &81.22\scriptsize{$\pm$0.39}\tabularnewline
		\bottomrule
		\end{tabularx}}
\vspace{-.0in}
\end{table}

%
%---------------------------------------------------
\section{Concluding Remarks}
\label{sec:disc}
\vspace{-0.0cm}
%---------------------------------------------------
\textbf{Summary and future directions.} We proposed {\flt} that comes with the following notable advantages. First, it is one-shot and considerably faster in convergence compared to its competitors, especially in structured non-IID scenarios. Second, the problem formulation of {\flt} can handle both generic client relatedness (no specific clustering) as well as disjoint clusters, if need be. Third, in contrast to most existing baselines, it does not require prior knowledge about number of clusters to form them. Fourth, it performs slightly better than the state-of-the-art baselines in standard federated learning settings and significantly outperforms them in structured non-IID scenarios. Finally, {\flt} offers improved fairness (least performance disparity among clients) compared to the existing baselines in most presented scenarios. Last but not least, in Appendix~\ref{sec:math_support}, we also provide a detailed convergence proof for {\flt} under common assumptions required for the convergence of {\fedavg}. We are currently working on extending {\flt} to handle dynamic clustering in the presence of newcomers or time-varying data distributions, as is briefly discussed in Algorithm~\ref{alg:flt}. Another avenue to explore is removing the hard threshold $\Gamma$ and modifying hierarchical clustering (HC) to work with soft (non-binary) adjacency matrices.
\begin{table*}[t!]
	\footnotesize
	\captionof{table}{Communication complexity analysis.}
	\vspace{-0.1cm}
	\label{tb:complexity}
	\centering
	{\tabcolsep=0pt\def\arraystretch{1.0}
		\begin{tabularx}{310pt}{l *4{>{\Centering}X}}\toprule
		{\flt} &  {\fedsem} \cite{xie2020multi}	& {\ifca} \cite{NEURIPS2020_e32cc80b}  \tabularnewline 
		\midrule
		$\underbrace{M*W_{\textup{enc}} + k\,M\,e}_\textup{one-off} + 2\rho M W_{\textup{local}} T $	& $2\rho M W_{\textup{local}} T$		& $\rho M W_{\textup{local}} T (C+ 1)$ \tabularnewline
		\bottomrule
		\end{tabularx}}
\vspace{-.0in}
\end{table*}

\textbf{Complexity and practical considerations.} {\flt} introduces a \emph{one-off} overhead due to the client relatedness discovery process ($\texttt{FCR}$, Algorithm~\ref{alg:formcluster}). However, owing to \texttt{FCR}, it is faster than the existing iterative baselines and less prone to convergence issues, as we have demonstrated throughout Section~\ref{sec:eval} and Appendix~\ref{sec:conv_issues}. One can argue that this step can be prone to security issues during the uplink communication (akin to standard {\fedavg} communications). A possible solution to address this is adding encryption and client ID verification processes, which are outside the scope of our work \cite{geyer2017differentially, bonawitz2017practical, aono2017privacy}. From communication complexity perspective, this overhead requires the server to send an encoder model ($W_\textup{enc}$) to the clients, and the clients to send an array of size $k e$ (with $k$ in $k$-means and $e$ denoting the latent embedding dimension of the encoder) to the server. A rough estimate of the communication complexity of the proposed {\flt}, and two discussed state-of-the-art competitors ({\fedsem} and {\ifca}) is summarized Table~\ref{tb:complexity}. As can be seen, the communication complexity of {\flt} and {\fedsem} are essentially the same except for the first two one-off terms (without $T$ for total communication rounds) which could be neglected. Note that this is an initialization step and it can also happen in multiple steps. Excluding a few clients from this process due to for instance their unavailability, does not impact the performance of $\texttt{FCR}$ and in turn {\flt}. On the other hand, {\ifca} mandates roughly $(C + 1)/2$ times ($C$ being the number of clusters) more communication complexity. This is because in every communication round, $C$ virtual center models will have to be sent to all the participating clients. From compute complexity perspective, possible \emph{fine-tuning} of the encoder is only for a small number of epochs ($J = 5$ in our experiments) on an encoder which is as simple as the local client models; and this is yet another one-off process that can be neglected over long runs. When the number of clients grows to tens of thousands (in very large-scale networks), {\flt} has the flexibility to decompose the client relatedness graph into disjoint clusters and degenerate to the same complexity level its competitors inflict.

%---------------------------------------------------
\section{Acknowledgment}
\label{sec:ack}
\vspace{-0.0cm}
%---------------------------------------------------
The authors would like to thank Shell Global Solutions International B.V. and Delft University of Technology (TU Delft) for their support and for the permission to publish this work. The authors also thank Ahmad Beirami from Facebook AI for helpful discussions.

\typeout{}
\bibliography{FL}
\bibliographystyle{ieeetr}

\clearpage

\appendix

%---------------------------------------------------
\section{Convergence Issues of Existing Iterative Baselines}
\label{sec:conv_issues}
\vspace{-0.0cm}
%--------------------------------------------------
In Subsection~\ref{ssec:scenario_4}, we compared the performance of the proposed method {\flt} against {\fedsem} \cite{xie2020multi} and \texttt{PCA+kM+HC}, which turned out to be the closest competitors according to the evaluation results of Scenarios 3. In Scenario 4, we proposed to challenge these methods on our newly introduced \emph{Structured Non-IID FEMNIST} dataset with $C = 10$ predefined clusters each having different set of labels. Basically, in Scenario 4, we impose two levels of non-IIDness, quantity skew and structured label distribution skew. The results in Subsection~\ref{ssec:scenario_4} confirmed that both {\fedsem} and \texttt{PCA+kM+HC} cannot offer a competitive performance against {\flt}. For {\fedsem} and any other iterative model comparison based approach, we argued that this is due to tremendous heterogeneity in model space, and thus, considerable increase in complexity of pairwise model comparisons in such large-scale non-IID scenarios. When it comes to \texttt{PCA+kM+HC}, even though it is a one-shot approach (following the line of thought proposed in \cite{Dennis2020hetero}), the client-side linear dimensionality reduction seems to be the bottleneck, even though it still benefits from hierarchical clustering and misses out on manifold learning with \texttt{UMAP} on the sever side. To further elaborate on this, in a relatively smaller setup with $M = 240$ clients, Fig.~\ref{fig:flt_clustering} demonstrates the efficiency of the proposed client relatedness formation process ($\texttt{FCR}$, Algorithm~\ref{alg:formcluster}) in automatically discovering the cluster structure in Scenario 4. The figure (a) shows how {\flt} manages to almost perfectly group the clients into $10$ clusters in the form of a block-diagonal adjacency matrix. On the other hand in (b) and (c), the cluster formation of {\fedsem} and \texttt{PCA+kM+HC} is far from ideal. The iterative process of cluster formation for {\fedsem} is visualized in Fig.~\ref{fig:fedsem_clustering_iters}. As can be seen, the cluster formation process does not improve beyond the communication round $15$ and even gets slightly worse from there on. This is the reason behind the sub-optimal performance of {\fedsem} \citep{xie2020multi} in large-scale structured non-IID condition imposed in Scenario 4.  
\begin{figure*}[t!]
    \begin{subfigure}[t]{.32\textwidth}
    	\centering
    	\includegraphics[trim={3.62cm 1.32cm 2.62cm 1.31cm},clip, width=\textwidth]{./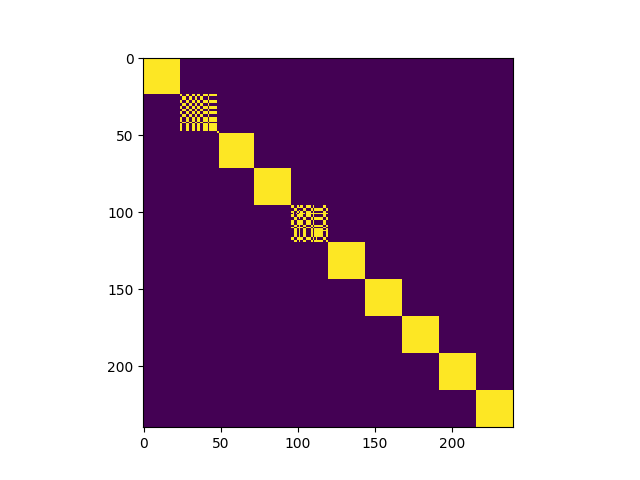}
    	\vspace{-0.3cm}
    	\caption{{\flt}}
    	\label{fig:flt_clustering_1}
    \end{subfigure}
    \hfill
    \begin{subfigure}[t]{.32\textwidth}
    	\centering
	    \includegraphics[trim={3.62cm 1.31cm 2.62cm 1.31cm},clip,width=\textwidth]{./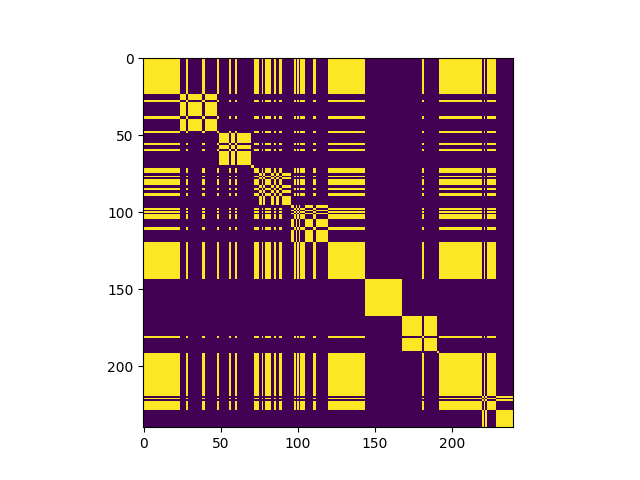}
    	\vspace{-0.3cm}
    	\caption{{\fedsem}}
    	\label{fig:fedsem_clustering}
    \end{subfigure}
    \hfill
    \begin{subfigure}[t]{.32\textwidth}
    	\centering
    	\includegraphics[trim={1.82cm 1.05cm 0.62cm 1.65cm},clip,width=0.965\textwidth]{./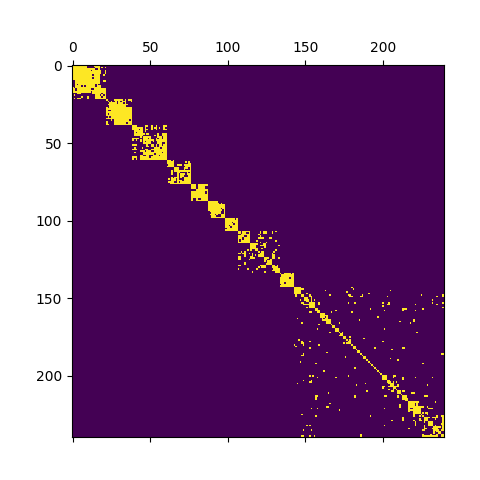}
    	\vspace{0.1cm}
    	\caption{\texttt{PCA+kM+HC}}
    	\label{fig:kmeans_clustering}
    \end{subfigure}
	\vspace{-0.2cm}
	\caption{One-shot cluster formation (re-ordered adjacency matrix corresponding to client relatedness graph) for {\flt} in Scenario 2 with $C = 10$ clusters and a total of $M = 240$ clients. The \emph{structure} of the $C = 10$ clusters is almost perfectly estimated by Algorithm~\ref{alg:formcluster}.}
	\label{fig:flt_clustering}
\end{figure*}
\begin{figure*}[t!]
	\centering
  	\hfill
	\begin{subfigure}[b]{0.18\textwidth}
		\centering
    	\includegraphics[trim={3.62cm 1.31cm 3.62cm 1.31cm},clip,width=\textwidth]{./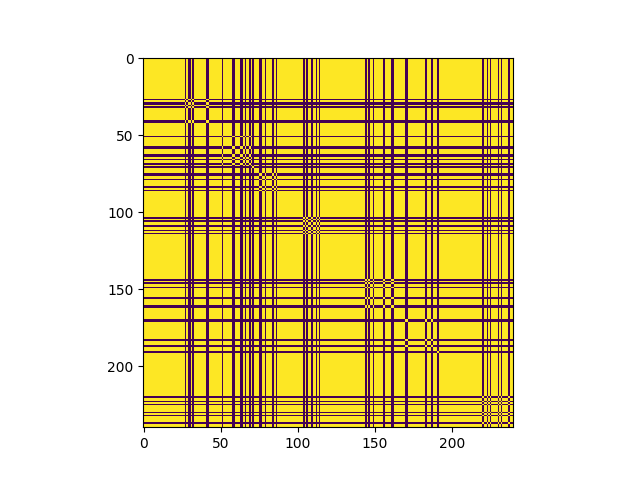}
    	\caption{Comm. round 1}
    \end{subfigure}
	\hfill
	\begin{subfigure}[b]{0.18\textwidth}
		\centering
	    \includegraphics[trim={3.62cm 1.31cm 3.62cm 1.31cm},clip,width=\textwidth]{./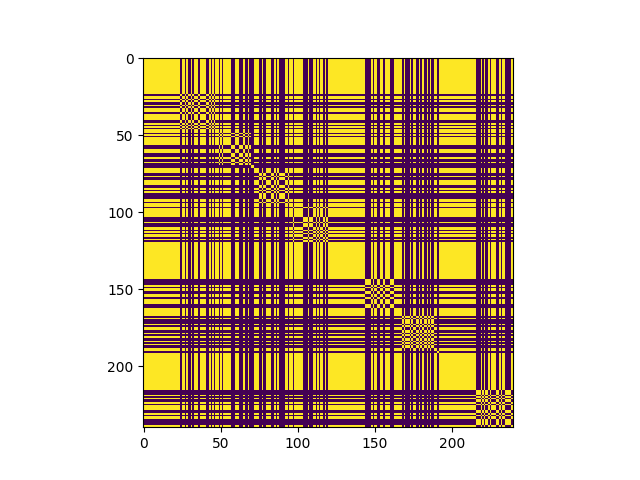}
    	\caption{Comm. round 3}
    \end{subfigure}
	\hfill
    \begin{subfigure}[b]{0.18\textwidth}
    	\centering
	    \includegraphics[trim={3.62cm 1.31cm 3.62cm 1.31cm},clip,width=\textwidth]{./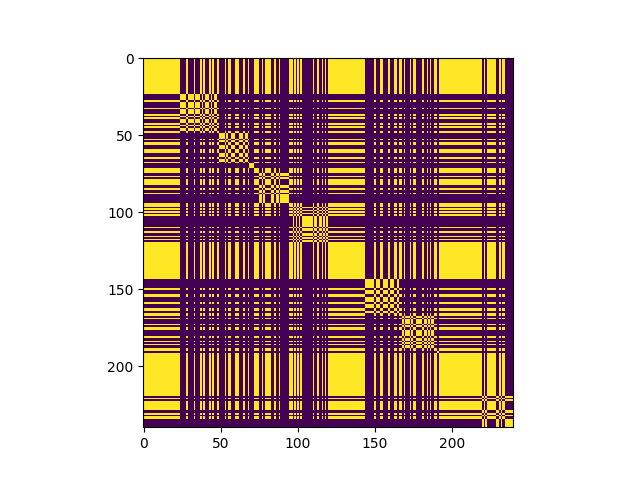}
	    \caption{Comm. round 6}
    \end{subfigure}
    \hfill
    \begin{subfigure}[b]{0.18\textwidth}
    	\centering
	    \includegraphics[trim={3.62cm 1.31cm 3.62cm 1.31cm},clip,width=\textwidth]{./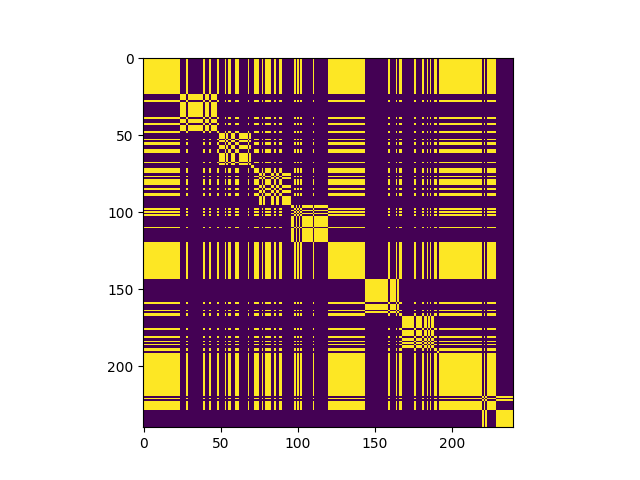}
	    \caption{Comm. round 9}
    \end{subfigure}
    \hfill
    \vfill
    \hfill
    \begin{subfigure}[b]{0.18\textwidth}
    	\centering
	    \includegraphics[trim={3.62cm 1.31cm 3.62cm 1.31cm},clip,width=\textwidth]{./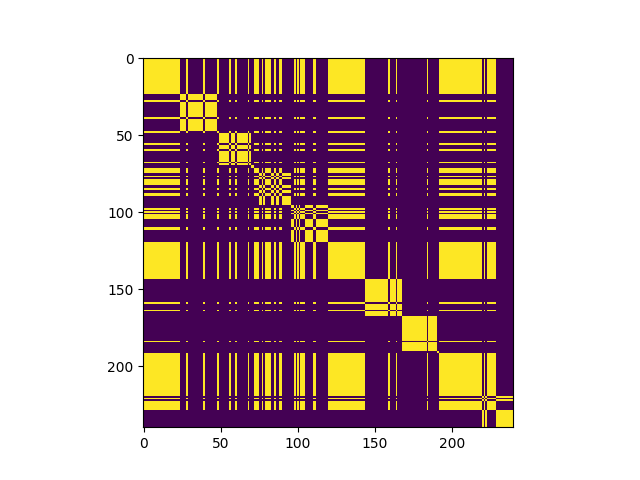}
	    \caption{Comm. round 12}
    \end{subfigure}
    \hfill
    \begin{subfigure}[b]{0.18\textwidth}
    	\centering
	    \includegraphics[trim={3.62cm 1.31cm 3.62cm 1.31cm},clip,width=\textwidth]{./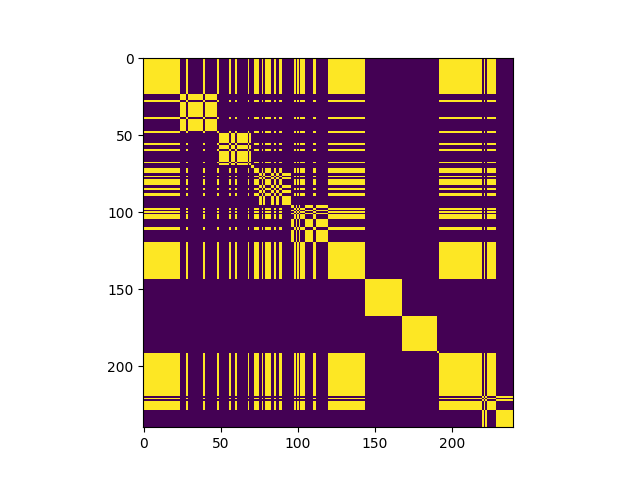}
	    \caption{Comm. round 15}
    \end{subfigure}
    \hfill
    \begin{subfigure}[b]{0.18\textwidth}
    	\centering
	    \includegraphics[trim={3.62cm 1.31cm 3.62cm 1.31cm},clip,width=\textwidth]{./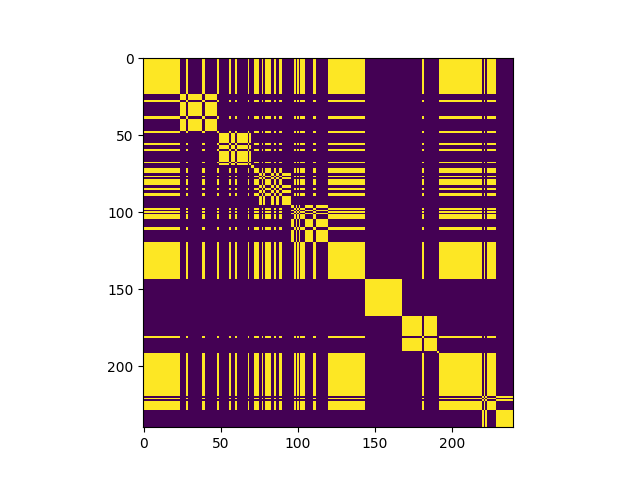}
	    \caption{Comm. round 18}
    \end{subfigure}
    \hfill
    \begin{subfigure}[b]{0.18\textwidth}
    	\centering
	    \includegraphics[trim={3.62cm 1.31cm 3.62cm 1.31cm},clip,width=\textwidth]{./Figures/clust_multicenter_nr_users-240_nr_clusters_10_ep_50_itr_-49.png}
	    \caption{Comm. round 50}
    \end{subfigure}
    % \hfill
    \vspace{-0.2cm}
	\caption{Iterative cluster formation discovery process of {\fedsem} in Scenario 2. The expected outcome is a block-diagonal matrix akin to the one presented in Fig.~\ref{fig:flt_clustering}; however, {\fedsem} gets stuck at comm. round 15.}
	\vspace{-0.0cm}
	\label{fig:fedsem_clustering_iters}
	\vspace{-0.0cm}
\end{figure*}
%

%---------------------------------------------------
\section{Mathematical Support}
\label{sec:math_support}
\vspace{-0.0cm}

%---------------------------------------------------
\subsection{Clustering performance of {\flt}}
\label{ssec:clusteringaccuracy}
\vspace{-0.0cm}
%--------------------------------------------------
In this subsection, we provide an approximate upper-bound on the clustering performance error of {\flt} by considering some common simplifying assumptions. Since our approach is founded upon separating client data in latent space, the overall clustering performance for the entire $M$ clients is determined by the encoder (part of the ConvAE) performance which in turn is affected by several parameters including the number of data samples provided to each client $n_m$, where $\sum_{m \in [M]} n_m = N$, and the total number of target classes (labels) is $L$.

We denote the classification error of the encoder on a single data sample by $P^{\text{cae}}_e$, and the average classification error on (data samples with) label $l$ by $P^l_e$. We then have
\begin{equation} \label{eq:1}
P_{\text{err}}^{\text{cae}} = P(y > \gamma),
\end{equation}
where $\gamma$ is a threshold value, and $y \in \mathbb{R}^e$ is the output of the encoder in the latent domain. Let us denote the total number of samples with label $l$ for client $m$ by $n_m^l$. The client then applies the encoder followed by \texttt{kMEANS}. Assuming that all samples corresponding to each label fall within one of these $k$ clusters, the performance of the autoencoder after averaging over samples corresponding to client $m$, label $l$, and for the $i$-th dimension of the latent embedding can be given by:
\begin{equation} \label{eq:2}
P^{m,l,i}_\text{err} = P(\overline{y}_i > \gamma).
\end{equation}
In this case, $\overline{y}_i$ would be drawn from a Gaussian distribution $\overline{y}_i \sim N({\mu}, {\sigma}^2)$. This essentially means that $\overline{y}$ would be of dimension $e$ and drawn from a Gaussian mixture model.
\begin{theorem}
Clustering error of {\flt} is upper-bounded by:
\begin{align} 
 P^{tot}_{\textup{err}} \lessapprox \sum_{m=1}^{M} \sum_{l=1}^{L} \sum_{i=1}^{e}  \exp\left(-\frac{\left(n_m^l \times Q^{-1}(P_{\textup{err}}^{\text{cae}}) \right)^2}{2}\right).   
\end{align}
where, $Q$ represents the $Q$-function.
\end{theorem}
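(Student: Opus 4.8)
The plan is to bound the total clustering error by decomposing it across the three indices that appear in the statement --- client $m$, label $l$, and latent dimension $i$ --- and to control each atomic error term by a Gaussian tail bound whose argument is driven by the per-sample encoder error $P_{\textup{err}}^{\text{cae}}$ and the sample count $n_m^l$.

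First I would fix a client $m$, a label $l$, and a coordinate $i$, and start from the per-coordinate misclassification probability already set up in the appendix, namely $P^{m,l,i}_\text{err} = P(\overline{y}_i > \gamma)$ with $\overline{y}_i \sim N(\mu,\sigma^2)$ the kMEANS-averaged centroid coordinate. The first step is to rewrite the single-sample encoder error as a $Q$-function value: since $P_{\textup{err}}^{\text{cae}} = P(y>\gamma) = Q\!\left(\tfrac{\gamma-\mu}{\sigma}\right)$, I can read off the normalized margin as $\tfrac{\gamma-\mu}{\sigma} = Q^{-1}(P_{\textup{err}}^{\text{cae}})$. This identifies $Q^{-1}(P_{\textup{err}}^{\text{cae}})$ as the effective per-sample separation between the Gaussian-component mean and the decision threshold, and isolates the only place where the encoder quality enters.

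Next I would account for the aggregation over the $n_m^l$ samples of label $l$ at client $m$. Because the per-coordinate centroid concentrates as more samples are averaged, the effective margin entering the $Q$-function is amplified by the sample count, giving $P^{m,l,i}_\text{err} \approx Q\!\left(n_m^l \, Q^{-1}(P_{\textup{err}}^{\text{cae}})\right)$. Applying the standard Gaussian (Chernoff) tail bound $Q(x) \le \exp(-x^2/2)$, valid for $x\ge 0$ (hence for $Q^{-1}(P_{\textup{err}}^{\text{cae}})\ge 0$, i.e.\ any usable encoder with $P_{\textup{err}}^{\text{cae}}\le \tfrac12$), then yields the atomic bound $\exp\!\left(-\tfrac{1}{2}\big(n_m^l\,Q^{-1}(P_{\textup{err}}^{\text{cae}})\big)^2\right)$. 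Finally, a union bound over $i\in[e]$, $l\in[L]$, and $m\in[M]$ --- treating a clustering error as the event that at least one coordinate/label/client atom fails --- sums these terms and delivers exactly the claimed triple-sum upper bound on $P^{tot}_\text{err}$.

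The hard part, and the reason the statement carries $\lessapprox$ rather than $\le$, is the amplification step that turns the per-sample margin $Q^{-1}(P_{\textup{err}}^{\text{cae}})$ into $n_m^l\,Q^{-1}(P_{\textup{err}}^{\text{cae}})$ inside the $Q$-function. A clean central-limit argument for averaging $n_m^l$ i.i.d.\ Gaussian coordinates would scale the z-score only by $\sqrt{n_m^l}$; recovering the stated linear-in-$n_m^l$ concentration relies on the specific modeling assumptions of the kMEANS aggregation together with the Gaussian-mixture latent model, and is where the approximation is absorbed. I would make this transparent by writing the centroid variance-reduction assumption out explicitly as the single non-rigorous ingredient, after which the two remaining steps --- the tail bound and the union bound --- are entirely routine.
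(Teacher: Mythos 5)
Your proposal follows essentially the same route as the paper's proof: the same per-(client, label, coordinate) decomposition, the same identification $P^{m,l,i}_{\text{err}} = Q\left(n_m^l \, Q^{-1}(P_{\text{err}}^{\text{cae}})\right)$, the same Chernoff bound $Q(x)\le \exp(-x^2/2)$, and an aggregation to the triple sum (you use a union bound where the paper expands $1-P^{tot}_{\text{err}}=\prod_{m,l,i}(1-P^{m,l,i}_{\text{err}})$ to first order; both yield the same sum, and yours is if anything cleaner since it requires no independence assumption and gives a genuine inequality). Your explicit flagging of the linear-in-$n_m^l$ margin amplification as the single non-rigorous ingredient is well placed: the paper simply asserts that identity without derivation, and, as you observe, averaging $n_m^l$ i.i.d.\ Gaussian coordinates would scale the z-score only by $\sqrt{n_m^l}$, so that step is indeed where the $\lessapprox$ is absorbed.
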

\begin{proof}
Under the above assumptions, the error in \eqref{eq:2} is given by the following $Q$-function:
\begin{equation} \label{eq:3}
P^{m,l,i}_{\text{err}} = Q\left(n_m^l \times Q^{-1}(P_{\text{err}}^{\text{cae}})\right).
\end{equation}
By employing the Chernoff bound associated with a $Q$-function \citep{Hoeffding1963}, the following bound can be given for \eqref{eq:3}:
\begin{equation} \label{eq:localerror}
P^{m,l,i}_{\text{err}} \leq \exp\left(-\frac{\left(n_m^l \times Q^{-1}(P_{\text{err}}^{\text{cae}}) \right)^2}{2}\right).
\end{equation}
By considering our earlier assumption on the independence of classification tasks conducted by the individual clients, the probability of a correct classification for all the client across all the labels can be given by:
\begin{equation} \label{eq:5}
1-P^{tot}_{\text{err}} = \prod_{m=1}^{M} \prod_{l=1}^{L}\prod_{i=1}^{e}(1-P^{m,l,i}_{\text{err}}).
\end{equation}
Accordingly, the total error can be given by:
\begin{equation} \label{eq:totalerror}
P^{tot}_{\text{err}} \approx \sum_{m=1}^{M} \sum_{l=1}^{L} \sum_{i=1}^{e} P^{m,l,i}_{\text{err}}.
\end{equation}
By inserting \eqref{eq:localerror} in \eqref{eq:totalerror}, we arrive at:
\begin{equation}
 P^{tot}_{\text{err}} \lessapprox \sum_{m=1}^{M} \sum_{l=1}^{L} \sum_{i=1}^{e}  \exp\left(-\frac{\left(n_m^l \times \mathcal{Q}^{-1}(P_{\text{err}}^{\text{cae}}) \right)^2}{2}\right), 
\end{equation}
which provides an upper-bound on the total clustering error of {\flt}.
\end{proof}
\subsection{Convergence of \flt}
\label{ssec:eval}
\vspace{-0.0cm}
%--------------------------------------------------
We prove that under same regularity assumptions of {\fedavg}, {\flt} converges. For analyzing the convergence characteristics of the proposed taskonomy-based federated learning approach, the following assumption are in place:
\begin{assumption}[Lower-boundedness]\label{as:1} All the local cost functions, $F_i$, are lower-bounded.
\begin{equation}
F_i(w_i) \geq \mathcal{\tilde{F}} > -\infty,   \; \; \forall w_i \in \mathbb{R}^d.
\end{equation}
\end{assumption}
\begin{assumption}[Lipschitz continuity]\label{as:2} The function $F$ is $L_W$-smooth.
\begin{equation}
\| F(w_1) - F(w_2) \| \leq  L_{w} \| w_1-w_2 \|,   \; \; \forall w_1, w_2 \in \mathbb{R}^d.
\end{equation}
\end{assumption}

\begin{assumption}[Compactness]\label{as:3}  Set of  local  models $\mathcal{W} =\{w_i | i = 1, 2, ..., M\}$ is compact and convex. 
\begin{equation} \label{eq:6}
   \| \nabla_w F_i(w) - \nabla_w F(w) \|^2 \leq \phi^2, \; \; \| \nabla_w F_i(w) \|^2 \leq \psi^2, 
\end{equation}
for all $w \in \mathbb{W}$, where $\phi$ and $\psi$ are some constants.
\end{assumption}

The {\flt} optimization problem can be written as the following generalized form:
\begin{equation}\vspace{-0.2cm}
\label{generalizedFL}
\min_{w_1, ... w_M \in \mathbb{R}^d} \sum_{m = 1}^{M} p_m F_m(w_i), \; where \;\; w_i = \sum_{j = 1}^{M}\overbar{A}_{i,j}w_j,
\end{equation}
where, $\overbar{A}$ represents the row-normalized connectivity matrix and, has the following property:
\begin{equation}
\sum_{j = 1}^{M} \overbar{A}_{i,j} = 1, \;\; i = 1, 2, ..., M.
\end{equation}
Note that the term on the right-hand-side of the \eqref{generalizedFL} corresponds to the updates on the server side. This is the part that is handled differently for cluster-based algorithms (e.g. {\fedsem}, {\flt} and {\ifca}).
For {\fedavg}, we have $A = \frac{1}{M}\1_{M \times M}$ where $\1_{M \times M}$ is an all-ones matrix.
In addition, the gradient descent update at the client side is given by
\begin{equation}
\label{gradientdescent}
w_i^t=w_i^{t-1} - \eta \nabla_{w_i} F_i(w_i^{t-1}),
\end{equation}
where, $\eta$ is the step size (or learning rate).

The right-hand-side of \eqref{generalizedFL} which is performed at the server side to update the model parameters can be written in a matrix form as follows:
\begin{equation}
\label{centeralaveraging}
W \gets \overbar{A}W.
\end{equation}
Inserting \eqref{centeralaveraging} in \eqref{gradientdescent}, we have
\begin{equation} \label{eq:4}
W^t \gets \overbar{A}(W^{t-1} - \eta \nabla_{W} F(W^{t-1})).
\end{equation}
Here, we work under the \emph{worst case} assumption where only one local epoch is done at the client side, as more local epochs would only help the network converge faster. Under the common regularity assumptions (Assumptions \ref{as:1}, \ref{as:2} and \ref{as:3}), by dropping the convex relation applied to the cost function of \cite{wang2020} as well as simplifying the alternating minimization strategy to a one-step gradient descent, the optimality gap for the stationary solution of \eqref{generalizedFL} is given by:
\begin{equation} \label{eq:8}
G_\mathbb{W}(W^t) \overset{\Delta}{=} \| W^{t} - W^{t-1} \|^2.
\end{equation}
\begin{theorem}[Convergence of optimality gap]
\label{convergencegap}
 Under Assumptions \ref{as:1}, \ref{as:2} and \ref{as:3}, an iterative gradient descent solution for \eqref{generalizedFL} satisfies:
\begin{align} 
\frac{1}{TM}&\sum_{i=1}^M \sum_{t = 1}^{T} \mathbb{E}[G_\mathbb{W}(W^t)] \leq \frac{1}{T \times M}\frac{F(W^1) - F(W^{T})}{(\frac{1}{2\eta} - \frac{L_{W}}{2} - \frac{\eta}{2}L_{W}^2 \| I_M-\overbar{A} \|^2)},
\end{align}
where, $I_M$ represents the $M \times M$ identity matrix.
\end{theorem}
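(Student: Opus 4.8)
The plan is to establish a one-step descent inequality for the global objective $F$ and then telescope it over the $T$ communication rounds, mirroring the standard non-convex analysis of gradient descent but carrying the aggregation operator $\overbar{A}$ through every step. First I would invoke the $L_W$-smoothness of $F$ (Assumption~\ref{as:2}) to write, for consecutive iterates,
\[
F(W^t) \le F(W^{t-1}) + \langle \nabla F(W^{t-1}),\, W^t - W^{t-1}\rangle + \tfrac{L_W}{2}\,\|W^t - W^{t-1}\|^2 .
\]
Abbreviating $g \overset{\Delta}{=} \nabla F(W^{t-1})$ and $\Delta^t \overset{\Delta}{=} W^t - W^{t-1}$, so that $G_\mathbb{W}(W^t) = \|\Delta^t\|^2$, I would substitute the {\flt} update $W^t = \overbar{A}(W^{t-1} - \eta g)$ from \eqref{eq:4}. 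Rearranging gives $-\eta g = \Delta^t + (I_M - \overbar{A})(W^{t-1} - \eta g)$, which lets the linear term be decomposed as $\langle g, \Delta^t\rangle = -\tfrac{1}{\eta}\|\Delta^t\|^2 - \tfrac{1}{\eta}\langle (I_M - \overbar{A})(W^{t-1}-\eta g),\, \Delta^t\rangle$.

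The first piece supplies the dominant $-\tfrac{1}{2\eta}$ coefficient (recovering ordinary gradient descent when $\overbar{A} = I_M$), while the second piece is the consensus/aggregation correction that must be absorbed. Next I would bound that cross term by Young's inequality, splitting off a $\tfrac{1}{2\eta}\|\Delta^t\|^2$ contribution and a residual controlled by the deviation operator $I_M - \overbar{A}$; bounding $\|(I_M-\overbar{A})(\cdot)\| \le \|I_M - \overbar{A}\|\,\|\cdot\|$ in operator norm and using smoothness to convert the remaining gradient magnitude into an $L_W^2$ factor produces exactly the $\tfrac{\eta}{2}L_W^2\|I_M-\overbar{A}\|^2$ term. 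Collecting everything yields the per-round descent $F(W^t) - F(W^{t-1}) \le -\big(\tfrac{1}{2\eta} - \tfrac{L_W}{2} - \tfrac{\eta}{2}L_W^2\|I_M-\overbar{A}\|^2\big)\|\Delta^t\|^2$, valid whenever $\eta$ is small enough that the bracketed coefficient is positive. Taking expectations over the randomly selected subset $\mathcal{S}^t$ and invoking the bounded-dissimilarity bounds of Assumption~\ref{as:3} handles the sampling noise; in this worst-case single-epoch regime the variance terms do not survive into the final constant.

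Finally I would sum the per-round inequality from $t=1$ to $T$: the left side telescopes into $F(W^1) - F(W^{T})$, which is finite by the lower-boundedness of Assumption~\ref{as:1}, and dividing through by $T$ and by the $M$ clients gives the claimed bound, exhibiting the $\mathcal{O}(1/T)$ rate once the denominator is treated as a fixed positive constant.

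The main obstacle I expect is the consensus term. Because $\overbar{A}$ is only row-stochastic (not symmetric or doubly stochastic), the cross term $\langle (I_M-\overbar{A})(\cdot),\, \Delta^t\rangle$ cannot be diagonalized by a clean spectral argument, and one must verify that the Young-inequality split together with the operator-norm bound $\|I_M-\overbar{A}\|$ genuinely closes the recursion rather than leaving a stray $\|W^{t-1}\|$-type term; this is where the analysis of \cite{wang2020} is essentially specialized. The accompanying constraint is ensuring positivity of the descent coefficient, which fixes an admissible step-size range in terms of $L_W$ and $\|I_M-\overbar{A}\|$ and is exactly what keeps the denominator in the statement meaningful.
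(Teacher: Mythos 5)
Your overall strategy (smoothness descent lemma, Young's inequality on the consensus correction, telescoping over $t$) matches the paper's, but the decomposition you choose for the linear term opens a gap that you flag yourself and never close. You substitute the composite update $W^t=\overbar{A}(W^{t-1}-\eta g)$, which gives the cross term $-\tfrac{1}{\eta}\langle (I_M-\overbar{A})(W^{t-1}-\eta g),\,\Delta^t\rangle$. After Young's inequality and the operator-norm bound this leaves a residual $\tfrac{1}{2\eta}\|I_M-\overbar{A}\|^2\|W^{t-1}-\eta g\|^2$ containing $\|(I_M-\overbar{A})W^{t-1}\|$ --- a quantity that is not controlled by $\|\Delta^t\|$ or by $\|\nabla_W F(W^{t-1})\|$ under Assumptions~1--3, so the per-round inequality does not reduce to a multiple of $G_\mathbb{W}(W^t)$ and the recursion does not close. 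Your ``main obstacle'' paragraph names exactly this stray term but offers no mechanism to remove it; smoothness converts gradient differences into iterate differences, not iterates into gradients, so no $L_W^2$ factor rescues it. (The term would vanish if $\overbar{A}W^{t-1}=W^{t-1}$ along the iterates, e.g.\ when $\overbar{A}$ is an idempotent block-averaging matrix and the server re-broadcasts cluster models, but you do not invoke or verify this.)

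The paper's proof avoids the issue entirely by writing the update in gradient-averaged form, $w_i^t-w_i^{t-1}=-\eta\sum_{j}\overbar{A}_{i,j}\nabla_w F_j(w_j^{t-1})$, i.e.\ $\Delta^t=-\eta\,\overbar{A}\nabla_W F(W^{t-1})$ (its Lemma~1). With that form the consensus correction is $\langle (I_M-\overbar{A})\nabla_W F(W^{t-1}),\,\Delta^t\rangle$ --- a pure gradient term with no $W^{t-1}$ in it --- and Young's inequality with $c=1/\eta$ together with $\|(I_M-\overbar{A})\nabla_W F\|\le\|I_M-\overbar{A}\|\,\|\nabla_W F\|$ produces the $\tfrac{\eta}{2}L_W^2\|I_M-\overbar{A}\|^2\|\Delta^t\|^2$ term, after which summation over $t$ telescopes to the claim. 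To repair your argument you should either adopt that form of the update or justify $(I_M-\overbar{A})W^{t-1}=0$. A separate minor point: the expectation over the sampled subset $\mathcal{S}^t$ and the dissimilarity bounds of Assumption~3 play no role in the paper's single-epoch derivation, so your appeal to them is unnecessary for this statement.
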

\vspace{-0.4cm}
\begin{proof}
Since $F$ is $L_W$-smooth under Assumption \ref{as:2}, using Lemma 1 in \citep{Bolte2013}, the following holds:
\begin{align}  \label{eq:18}
F(W^t) & \leq F(W^{t-1}) + \langle \nabla_{W} F(W^{t-1}), W^{t}-W^{t-1} \rangle + \frac{L_{W}}{2} \| W^{t}-W^{t-1} \|^2.
\end{align}
\vspace{-0.1cm}

The first term of \eqref{eq:18} is bounded as given by the following lemma:
\begin{lemma}\label{lemma1}
For any $t \geq 0$, the following inequality holds:
\begin{align} \label{eq:lemma4}
\langle &\nabla_{W} F(W^{t-1}), W^{t}-W^{t-1} \rangle \leq -\frac{1}{\eta} \| W^{t}-W^{t-1} \| + \langle \nabla_{W} F(W^{t-1}) - \overbar{A}\nabla_{W}F(W^{t-1}), W^{t}-W^{t-1} \rangle.
\end{align}
\end{lemma}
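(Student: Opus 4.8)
The plan is to treat the claim as an algebraic identity extracted from the server update, followed by a splitting of the gradient into its aggregated and residual components. First I would substitute the update \eqref{eq:4}, $W^t = \overbar{A}(W^{t-1} - \eta\nabla_W F(W^{t-1}))$, and rearrange it into the exact relation $\eta\,\overbar{A}\nabla_W F(W^{t-1}) = \overbar{A}W^{t-1} - W^t$. This is the key step that converts aggregated-gradient information into the increment $W^t - W^{t-1}$ appearing on the left-hand side of Lemma~\ref{lemma1}; when the previous iterate already lies in the consensus subspace ($\overbar{A}W^{t-1} = W^{t-1}$) the right-hand side collapses to $-(W^t - W^{t-1})$, giving $\overbar{A}\nabla_W F(W^{t-1}) = -\tfrac{1}{\eta}(W^t - W^{t-1})$.

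Next I would use the trivial identity $\nabla_W F = \overbar{A}\nabla_W F + (I_M - \overbar{A})\nabla_W F$, take the inner product of both sides with $W^t - W^{t-1}$, and treat the two pieces separately. Substituting the relation above into the first piece yields $\langle \overbar{A}\nabla_W F, W^t - W^{t-1}\rangle = -\tfrac{1}{\eta}\|W^t - W^{t-1}\|^2$, which is the descent term (here I would note that the first norm on the right-hand side of the stated inequality should read as a \emph{squared} norm). The second piece is $\langle (I_M - \overbar{A})\nabla_W F, W^t - W^{t-1}\rangle = \langle \nabla_W F - \overbar{A}\nabla_W F, W^t - W^{t-1}\rangle$, which is kept verbatim and matches the claimed residual term exactly. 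Reassembling the two contributions then produces the inequality.

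The main obstacle is the discrepancy between $\overbar{A}(W^{t-1} - \eta\nabla_W F)$ and the cleaner $W^{t-1} - \eta\,\overbar{A}\nabla_W F$: expanding \eqref{eq:4} honestly leaves the extra cross term $-\tfrac{1}{\eta}\langle (I_M - \overbar{A})W^{t-1}, W^t - W^{t-1}\rangle$, because $\overbar{A}$ is only row-stochastic and is neither symmetric nor idempotent. I would control this either by invoking the consensus fixed-point property $\overbar{A}W^{t-1}\approx W^{t-1}$, so that the term vanishes, or by absorbing it as a residual bounded through Cauchy--Schwarz together with the gradient bound of Assumption~\ref{as:3}; this is precisely the point where the inequality (rather than an exact equality) and the factor $\|I_M - \overbar{A}\|$ that surfaces in Theorem~\ref{convergencegap} originate. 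Once Lemma~\ref{lemma1} is established, it plugs directly into \eqref{eq:18}, and the retained residual inner product is then tamed by Young's inequality to generate the $\tfrac{\eta}{2}L_W^2\|I_M - \overbar{A}\|^2$ contribution in the final convergence bound.
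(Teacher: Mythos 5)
Your proposal is correct and follows essentially the same route as the paper: both hinge on reading the combined client-plus-server update as $\overbar{A}\nabla_{W}F(W^{t-1})=-\tfrac{1}{\eta}\bigl(W^{t}-W^{t-1}\bigr)$ and then rearranging the inner product $\langle \nabla_{W}F-\overbar{A}\nabla_{W}F,\,W^{t}-W^{t-1}\rangle$, which is exactly your splitting $\nabla_{W}F=\overbar{A}\nabla_{W}F+(I_{M}-\overbar{A})\nabla_{W}F$ read in reverse. Your two side remarks are also on target: the first right-hand-side term should indeed carry a \emph{squared} norm, and the cross term $-\tfrac{1}{\eta}\langle (I_{M}-\overbar{A})W^{t-1},\,W^{t}-W^{t-1}\rangle$ you flag is silently dropped in the paper, whose starting identity tacitly takes the averaging to act only on the gradients (equivalently $\overbar{A}W^{t-1}=W^{t-1}$).
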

The proof of the above lemma is given in the following:
\begin{proof}
Based on the update rule sketched in \eqref{gradientdescent}, and employing the right-hand-side term of \eqref{generalizedFL}, we have:
\begin{equation} \label{eq:15}
-\frac{w_i^t-w_i^{t-1}}{\eta} = \sum_{j=1}^{M}\overbar{A}_{i,j} \nabla_w F_j(w_j^{t-1}).
\end{equation}
which leads to:
\begin{align} \label{eq:16}
\langle \nabla_w F(W^{t-1})& -\overbar{A} \nabla_w F(W^{t-1}), W^{t}-W^{t-1} \rangle \nonumber \\ = \langle \nabla_w & F(W^{t-1})+(W^t-W^{t-1})/\eta, W^{t}-W^{t-1} \rangle  \nonumber \\ & \geq \langle \nabla_W F(W^{t-1}), W^{t}-W^{t-1} \rangle \nonumber \\ & + \frac{1}{\eta} \langle W^{t}-W^{t-1}, W^{t}-W^{t-1} \rangle,
\end{align}
completing the proof of Lemma \ref{lemma1}.
\end{proof}
\vspace{-0.2cm}
Continuing on \eqref{eq:18}, by inserting \eqref{eq:lemma4} in \eqref{eq:18}  and using $\langle x, y \rangle \leq \frac{1}{2c}\|x\|^2 + \frac{c}{2}\|y\|^2$ with $c = \frac{1}{\eta}$, we have
\begin{align} \label{eq:17}
F(W^t) - & F(W^{t-1}) \leq \left(-\frac{1}{2\eta} + \frac{L_{W}}{2}\right) \| W^{t}-W^{t-1} \|^2 + \frac{\eta}{2} \|  \nabla_W F(W^{t-1})-\overbar{A} \nabla_W F(W^{t-1}) \|^2,
\end{align}
which can be further simplified as:
\begin{align} \label{eq:25}
F(W^t) - & F(W^{t-1}) \leq \left(-\frac{1}{2\eta} + \frac{L_{W}}{2}\right) \| W^{t}-W^{t-1} \|^2 + \frac{\eta}{2}L_{W}^2 \| I_M-\overbar{A} \|^2  \| W^{t}-W^{t-1} \|^2.
\end{align}
Finally, taking summation over $t$ from 1 to $T$ in \eqref{eq:25}, we arrive at:
\begin{align} \label{eq:26}
\left(\frac{1}{2\eta} - \frac{L_{W}}{2} - \frac{\eta}{2}L_{W}^2 \| I-\overbar{A} \|^2\right) \sum_{t=1}^T \| W^{t}-W^{t-1} \|^2 \leq  F(W^1) - F(W^{T}),
\end{align}
which after taking expectation over client models, completes the proof of the theorem.
\end{proof}

Notably, in \eqref{eq:26}, the factor $(\frac{1}{2\eta} - \frac{L_{W}}{2} - \frac{\eta}{2}L_{W}^2 \| I-\overbar{A} \|^2)$ has to be positive. Working this out, results in $\eta$ following:  
\begin{equation} \label{eq:29}
0 < \eta < \frac{2}{L_{W} + L_{W} \sqrt{1 + 4\| I-\overbar{A}. \|^2}}
\end{equation}

In order to go beyond our worst case assumption of single local epoch and incorporate the impact of $E$, we introduce the notation $W^{t, E}$ where $t \in [1,T]$ indexes the $t$-th communication round and $E$ indicates the final local epoch.
Since the function $F$ still remains $L_W$ smooth under Assumption \ref{as:2} after performing $E$ local epochs, it follows from equation \ref{eq:18} that:
\begin{align} \label{eq:1}
F(W^{t,E}) \leq F(W^{t-1,E}) + \langle \nabla_{W} F(W^{t-1,E}), W^{t,E}-W^{t-1,E}\rangle + \frac{L_{W}}{2} \| W^{t,E}-W^{t-1,E} \|^2.
\end{align}
From our Lemma \ref{lemma1}, it also follows that:
\begin{align} \label{eq:2}
\langle \nabla_{W} F(W^{t-1,E}), W^{t,E}-W^{t-1,E} \rangle \leq -\frac{1}{\eta} \| W^{t,E}-W^{t-1,E} \| \nonumber \\ + \langle \nabla_{W} F(W^{t-1,E}) - \overbar{A}\nabla_{W}F(W^{t-1,E}), W^{t,E}-W^{t-1,E} \rangle,
\end{align}
because equation \ref{eq:15} remains valid even after performing $E$ epochs of gradient descent locally as the operation of aggregating weights and gradients still remains the same as:
\begin{align} \label{eq:3}
-\frac{w_i^{t,E}-w_i^{t-1,E}}{\eta} = \sum_{j=1}^{M}\overbar{A}_{i,j} \nabla_w F_j(w_j^{t-1,E}).
\end{align}
Therefore, the rest of the proof for Lemma 1 remains unaffected.
Plugging equation \ref{eq:2} into equation \ref{eq:1} and applying the Cauchy-Schwartz inequality (mentioned earlier) results in:
\begin{align} \label{eq:4}
F(W^{t,E}) - & F(W^{t-1,E}) \leq \left(-\frac{1}{2\eta} + \frac{L_{W}}{2}\right) \| W^{t,E}-W^{t-1,E} \|^2 + \frac{\eta}{2} \|  \nabla_W F(W^{t-1,E})-\overbar{A} \nabla_W F(W^{t-1,E}) \|^2.
\end{align}
By summing over all communication rounds $[1,T]$ and taking an expectation over all clients $M$ in the same manner as in the case of \eqref{eq:26} we arrive at:
\begin{align} \label{eq:5}
\frac{1}{TM}&\sum_{i=1}^M \sum_{t = 1}^{T} \mathbb{E}[G_\mathbb{W}(W^{t,E})] \leq \frac{1}{TM}\frac{F(W^{1,E}) - F(W^{T,E})}{(\frac{1}{2\eta} - \frac{L_{W}}{2} - \frac{\eta}{2}L_{W}^2 \| I_M-\overbar{A} \|^2)}.
\end{align}
Now, since $F_j(W^{t,E})$ (of client $j$) is a convex and differentiable function and its gradient $\nabla_w F_j(w_j^{t,E})$ is Lipschitz continuous (Assumption \ref{as:2}), we have a convergence bound (as derived in \cite{nesterov2003introductory}) on the gradient descent procedure performed at every client for a given communication round:
\begin{equation}
\label{eq:6}
F_j(w_j^{t,E}) - F_j(w_j^{t,*}) \leq \frac{\| w_j^{t,0} - w_j^{t,*} \|^2}{2 \eta E},
\end{equation}
where $w_j^{t,0}$ and $w_j^{t,*}$ are the $0$-th epoch parameter vector and the optimal parameter vector, respectively. $F_j(w_j^{t,*})$ indicates the optimal/minimal loss function value. Since the parameters are aggregated by multiplying the parameter matrix $W^{t,E}$ with the constant adjacency matrix $\overbar{A}$ and other normalization constants, the convergence bound remains the same in proportion with the parameters, and thus:
\begin{equation}
F(W^{t,E}) - F(W^{t,*}) \leq \frac{\| W^{t,0} - W^{t,*} \|^2}{2 \eta E}.
\end{equation}
By rearranging the numerator of the R.H.S. of \eqref{eq:5}, we obtain:
\begin{align} \label{eq:8}
F(W^{1,E}) - F(W^{T,E}) = F(W^{1,E}) + F(W^{1,*}) - F(W^{1,*}) - F(W^{T,E}) + F(W^{T,*}) - F(W^{T,*}) \nonumber \\ \leq \frac{\| W^{1,0} - W^{1,*} \|^2 + \| W^{t,0} - W^{t,*} \|^2 + 2 \eta E (F(W^{1,*}) - F(W^{T,*}))}{2 \eta E}.
\end{align}

Finally, by using \eqref{eq:8} in the \eqref{eq:5}, we obtain a more elaborate convergence bound on the optimality gap of {\flt} now incorporating $E$:
\begin{align} \label{eq:9}
\frac{1}{TM}&\sum_{i=1}^M \sum_{t = 1}^{T} \mathbb{E}[G_\mathbb{W}(W^{t,E})] \leq \frac{\| W^{1,0} - W^{1,*} \|^2 + \| W^{t,0} - W^{t,*} \|^2 + 2 \eta E (F(W^{1,*}) - F(W^{T,*}))}{2 \eta E TM(\frac{1}{2\eta} - \frac{L_{W}}{2} - \frac{\eta}{2}L_{W}^2 \| I_M-\overbar{A} \|^2)}.
\end{align}
%
% \begin{table*}
% \centering
% \normalsize
% \begin{minipage}{\textwidth}
% \textcolor{blue}{
% \begin{align} \label{eq:8}
% F(W^{1,E}) - F(W^{T,E}) = F(W^{1,E}) + F(W^{1,*}) - F(W^{1,*}) - F(W^{T,E}) + F(W^{T,*}) - F(W^{T,*}) \nonumber \\ \leq \frac{\| W^{1,0} - W^{1,*} \|^2 + \| W^{t,0} - W^{t,*} \|^2 + 2 \eta E (F(W^{1,*}) - F(W^{T,*}))}{2 \eta E}.
% \end{align}
% %
% \begin{align} \label{eq:9}
% \frac{1}{TM}&\sum_{i=1}^M \sum_{t = 1}^{T} \mathbb{E}[G_\mathbb{W}(W^{t,E})] \leq \frac{\| W^{1,0} - W^{1,*} \|^2 + \| W^{t,0} - W^{t,*} \|^2 + 2 \eta E (F(W^{1,*}) - F(W^{T,*}))}{2 \eta E TM(\frac{1}{2\eta} - \frac{L_{W}}{2} - \frac{\eta}{2}L_{W}^2 \| I_M-\overbar{A} \|^2)}.
% \end{align}
% }
% \hrule
% \end{minipage}
% \end{table*}

\clearpage
%---------------------------------------------------
\section{More Detailed Evaluation Results}
\label{sec:more_eval}
\vspace{-0.0cm}
%--------------------------------------------------
%
\begin{table*}[t!]
	\footnotesize
	\caption{Train and test accuracies ($\% \pm$ std. error) for Scenario 1.}
	\vspace{-0.2cm}
	\label{tb:scenario_1_sup}
	\centering
	{\tabcolsep=0pt\def\arraystretch{1.0}
		\begin{tabularx}{460pt}{l *8{>{\Centering}X}}
			\toprule
			& \multicolumn{4}{c}{MNIST}   & \multicolumn{4}{c}{CIFAR10} \tabularnewline \cmidrule(l){2-5}\cmidrule(l){6-9}
			& \multicolumn{2}{c}{train}   & \multicolumn{2}{c}{test}
			& \multicolumn{2}{c}{train}   & \multicolumn{2}{c}{test}
			\tabularnewline \cmidrule(l){2-3}\cmidrule(l){4-5}
			\cmidrule(l){6-7}\cmidrule(l){8-9}
			Method								& acc. 							&var. 	&acc.  	& var. & acc. 							&var. 	&acc.  	& var. \tabularnewline 
			\midrule
			{\fedavg}  					&82.50\scriptsize{$\pm$0.16} &43.19
			&82.73\scriptsize{$\pm$0.38} 	&44.65
			&35.08\scriptsize{$\pm$0.21} &152.54 
			&33.29\scriptsize{$\pm$0.47} 		&108.20 \tabularnewline
			\texttt{Local}  	&99.25\scriptsize{$\pm$0.04}  &0.60
			&97.41\scriptsize{$\pm$0.16}  	&3.44 	
			&99.87    \scriptsize{$\pm$0.02} & 0.80
			&79.81\scriptsize{$\pm$0.40} 	&115.35 \tabularnewline
			\texttt{PCA+kM+HC}  			&83.65\scriptsize{$\pm$0.16} &49.2
			&83.82\scriptsize{$\pm$0.37}  	&41.13 		
			&78.07\scriptsize{$\pm$0.19}     &589.65 
			&75.35\scriptsize{$\pm$0.40} 	&589.6 \tabularnewline
			{\fedsem} \cite{xie2020multi} 					&97.94\scriptsize{$\pm$0.06 } &2.58
			&98.01\scriptsize{$\pm$0.14} 	&2.12
			&83.43    \scriptsize{$\pm$0.17} &725.99
			&78.23\scriptsize{$\pm$0.41} 		&695.01 \tabularnewline
			\rowcolor{LightCyan}
			{\flt} (Enc1)  				&97.96\scriptsize{$\pm$0.06} &2.50 
			&97.98\scriptsize{$\pm$0.14} 	&2.17 	
			&93.52\scriptsize{$\pm$0.11} &36.21
			&87.11\scriptsize{$\pm$0.33} 		&88.80 \tabularnewline
			\rowcolor{LightCyan}
			{\flt} (Enc2) 				&97.95\scriptsize{$\pm$0.06} &2.50
			&97.97\scriptsize{$\pm$0.14} 	&2.17 	
			&93.51 \scriptsize{$\pm$0.11} &39.42 
			&87.17\scriptsize{$\pm$0.33}  		&84.62 \tabularnewline
			\bottomrule
	\end{tabularx}}
\vspace{-.3cm}
\end{table*} 
\begin{table*}[t!]
	\footnotesize
	\caption{Train and test accuracies ($\% \pm$ std. error) for Scenario 2.}
	\vspace{-0.2cm}
	\label{tb:scenario_2_sup}
	\centering
	{\tabcolsep=0pt\def\arraystretch{1.0}
		\begin{tabularx}{460pt}{l *8{>{\Centering}X}}
			\toprule
			& \multicolumn{4}{c}{MNIST}   & \multicolumn{4}{c}{CIFAR10} \tabularnewline \cmidrule(l){2-5}\cmidrule(l){6-9}
			& \multicolumn{2}{c}{train}   & \multicolumn{2}{c}{test}
			& \multicolumn{2}{c}{train}   & \multicolumn{2}{c}{test}
			\tabularnewline \cmidrule(l){2-3}\cmidrule(l){4-5}
			\cmidrule(l){6-7}\cmidrule(l){8-9}
			Method								& acc. 							&var. 	&acc.  	& var. & acc. 							&var. 	&acc.  	& var. \tabularnewline 
			\midrule
			{\fedavg}  					&85.62\scriptsize{$\pm$0.14}    &20.57
			&86.08\scriptsize{$\pm$0.35}    &17.55      &55.87\scriptsize{$\pm$0.22}  &14.18
             &52.77\scriptsize{$\pm$0.50}  &19.18 \tabularnewline
			\texttt{Local}      &99.17\scriptsize{$\pm$0.04}    &0.36
					&96.58\scriptsize{$\pm$0.17}    &1.81 
					&100.0\scriptsize{$\pm$0.0} &0.0
					&70.44\scriptsize{$\pm$0.46}   &72.29 \tabularnewline
			\texttt{PCA+kM+HC}  				&85.27\scriptsize{$\pm$0.15} &19.09 &85.45\scriptsize{$\pm$0.35}    &18.40
			&68.75\scriptsize{$\pm$0.21} &398.82
			&61.75\scriptsize{$\pm$0.49}   &312.22 \tabularnewline
			{\fedsem} \cite{xie2020multi}					&94.26\scriptsize{$\pm$0.10}    &1.27   	&94.14\scriptsize{$\pm$0.23}    &1.20 
			&84.30\scriptsize{$\pm$0.16} &426.27
			&73.26\scriptsize{$\pm$0.44}   &339.39 \tabularnewline
			\rowcolor{LightCyan}

			{\flt} (Enc1)  			&97.54\scriptsize{$\pm$0.06} &0.93
			&97.37\scriptsize{$\pm$0.16} &1.07  
			&95.01\scriptsize{$\pm$0.10} &124.07
			&79.97\scriptsize{$\pm$0.40} &146.84 \tabularnewline
			\rowcolor{LightCyan}

			{\flt} (Enc2) 				&97.52\scriptsize{$\pm$0.06} &0.93
			&97.37\scriptsize{$\pm$0.16} &1.07 
			&94.87\scriptsize{$\pm$0.10} &24.07
			&80.00\scriptsize{$\pm$0.40} &78.66 \tabularnewline
			\bottomrule
	\end{tabularx}}
\vspace{-.3cm}
\end{table*} 
\begin{table*}[t!]
	\footnotesize
	\caption{Train and test accuracies ($\% \pm$ std. error) for Scenario 3.}
	\vspace{-0.2cm}
	\label{tb:scenario_3_sup}
	\centering
	{\tabcolsep=0pt\def\arraystretch{1.0}
		\begin{tabularx}{460pt}{l *8{>{\Centering}X}}
			\toprule
			& \multicolumn{4}{c}{MLP ($T=1000$)}   & \multicolumn{4}{c}{CNN ($T=100$ except for {\ifca} with $T=1500$)} \tabularnewline \cmidrule(l){2-5}\cmidrule(l){6-9}
			& \multicolumn{2}{c}{train}   & \multicolumn{2}{c}{test}
			& \multicolumn{2}{c}{train}   & \multicolumn{2}{c}{test}
			\tabularnewline \cmidrule(l){2-3}\cmidrule(l){4-5}
			\cmidrule(l){6-7}\cmidrule(l){8-9}
			Method								& acc. 							&var. 	&acc.  	& var. & acc. 							&var. 	&acc.  	& var. \tabularnewline 
			\midrule
			{\fedavg}  				&  74.43\scriptsize{$\pm$0.25} & 100
			&  72.76\scriptsize{$\pm$0.76} & 202.61 
			&82.96\scriptsize{$\pm$0.22}  &81.28
			&81.64\scriptsize{$\pm$0.66}  &147.03 \tabularnewline

            % adjusted values mlp? cnn OK
% 			{\texttt{PCA+kM+HC}}   	&74.51\scriptsize{$\pm$0.25} & 104.65 &73.14\scriptsize{$\pm$0.76} &205.41 		&86.40\scriptsize{$\pm$0.2} &74.18 &79.50\scriptsize{$\pm$0.69} &166.60	\tabularnewline
			
			{\texttt{PCA+kM+HC}\tiny{($C{=}3$)}}		&75.77\scriptsize{$\pm$0.25} & 111.44 &71.94\scriptsize{$\pm$0.77} &194.75		  	&83.99\scriptsize{$\pm$0.21} & 89.92 &80.07\scriptsize{$\pm$0.68} &160.32 			\tabularnewline
			
			{\ifca}	\cite{NEURIPS2020_e32cc80b}				&61.57\scriptsize{$\pm$0.28} &48.70
			&  61.24\scriptsize{$\pm$0.84} & 176.38 
			& 84.03\scriptsize{$\pm$0.21}  &35.64
			& 81.47\scriptsize{$\pm$0.66}  &118.71\tabularnewline
			
			{\fedsem}\cite{xie2020multi}				&  77.03\scriptsize{$\pm$0.24} & 100.18 
			&  72.45\scriptsize{$\pm$0.76} & 185.96 
			& 86.57\scriptsize{$\pm$0.20}  &56.83
			& 79.99\scriptsize{$\pm$0.68}  &156.27\tabularnewline
			
			\rowcolor{LightCyan}
			{\flt}		 			&  76.43\scriptsize{$\pm$0.25} & 81.13
			&  74.11\scriptsize{$\pm$0.74} & 171.31 
			&84.93\scriptsize{$\pm$0.21} &56.22
			& 82.14\scriptsize{$\pm$0.65}  &15.03\tabularnewline
			
			\rowcolor{LightCyan}			
			{\flt\tiny{($C{=}2$)}}  	&75.23\scriptsize{$\pm$0.22} &96.79  &72.83\scriptsize{$\pm$0.76} &175.55 		&83.15\scriptsize{$\pm$0.22} &78.27  &80.53\scriptsize{$\pm$0.68} &150.26	\tabularnewline
			
			\rowcolor{LightCyan}			
			{\flt\tiny{($C{=}3$)}}   	&75.73\scriptsize{$\pm$0.25} & 101.14 &72.61\scriptsize{$\pm$0.76} &184.59 		&83.88\scriptsize{$\pm$0.21} & 74.82 &79.72\scriptsize{$\pm$0.69} &170.10	\tabularnewline
			
			\rowcolor{LightCyan}			
			{\flt\tiny{($C{=}5$)}}   	&77.24\scriptsize{$\pm$0.24} & 82.09 &72.05\scriptsize{$\pm$0.77} &180.32 		&85.49\scriptsize{$\pm$0.20} & 63.87 &78.71\scriptsize{$\pm$0.70} &169.22	\tabularnewline
			\bottomrule
	\end{tabularx}}
\vspace{-.3cm}
\end{table*} 
\begin{table*}[t!]
	\footnotesize
	\caption{Train and test accuracies ($\% \pm$ std. error) for Scenario 4.}
	\vspace{-0.2cm}
	\label{tb:scenario_4_sup}
	\centering
	{\tabcolsep=0pt\def\arraystretch{1.0}
		\begin{tabularx}{460pt}{l *8{>{\Centering}X}}
			\toprule
			& \multicolumn{4}{c}{MLP}   & \multicolumn{4}{c}{CNN} \tabularnewline \cmidrule(l){2-5}\cmidrule(l){6-9}
			& \multicolumn{2}{c}{train}   & \multicolumn{2}{c}{test}
			& \multicolumn{2}{c}{train}   & \multicolumn{2}{c}{test}
			\tabularnewline \cmidrule(l){2-3}\cmidrule(l){4-5}
			\cmidrule(l){6-7}\cmidrule(l){8-9}
			Method								& acc. 							&var. 	&acc.  	& var. & acc. 							&var. 	&acc.  	& var. \tabularnewline 
			\midrule
			{\fedavg}  				&45.48\scriptsize{$\pm$0.15} &281.95
			&  46.50\scriptsize{$\pm$0.36} & 100.27 &56.19\scriptsize{$\pm$0.15} &908.67 &53.58\scriptsize{$\pm$0.36} &883.47\tabularnewline
			%\texttt{Local}  				&  71.67\scriptsize{$\pm$0.33} & 131.97 \tabularnewline
			
            % ADJUSTED THR
            % {\texttt{PCA+kM+HC} adj. thr.}				  	&&72.66\scriptsize{$\pm$0.13} & 770.84 &48.75\scriptsize{$\pm$0.36} &467.92 &-\scriptsize{$\pm$-} & -
            %&61.20\scriptsize{$\pm$-} &895.30			\tabularnewline

			{\texttt{PCA+kM+HC}}  	&72.66\scriptsize{$\pm$0.13} & 770.84 &48.75\scriptsize{$\pm$0.36} &467.92 		&62.92\scriptsize{$\pm$0.14} & 863.94 &59.87\scriptsize{$\pm$0.36} &912.39	\tabularnewline
            
            % {\texttt{PCA+kM+HC}\tiny{($C{=}10$)}}  	&-\scriptsize{$\pm$-} & - &-\scriptsize{$\pm$-} &- 		&57.37\scriptsize{$\pm$0.15} &736.40 &54.77\scriptsize{$\pm$0.36} &724.24 			\tabularnewline
            
			{\fedsem} \cite{xie2020multi}				  	&45.19\scriptsize{$\pm$0.15} &508.13
			&  43.53\scriptsize{$\pm$0.36} & 406.60 &79.45\scriptsize{$\pm$0.12} &1295.72 &76.22\scriptsize{$\pm$0.29} &1264.38\tabularnewline
			\rowcolor{LightCyan}
			{\flt} 		 			&91.23\scriptsize{$\pm$0.08} &155.36
			&86.51\scriptsize{$\pm$0.24} & 207.60 &97.95\scriptsize{$\pm$0.04} &6.59 &93.69\scriptsize{$\pm$0.10} &98.65 \tabularnewline
			\bottomrule
	\end{tabularx}}
 \vspace{.0in}
\end{table*} 
Tables~\ref{tb:scenario_1_sup}, \ref{tb:scenario_2_sup}, \ref{tb:scenario_3_sup}, and \ref{tb:scenario_4_sup} respectively summarize the statistics of our experimentation on Scenarios 1, 2, 3 and 4, described in Section~\ref{sec:eval}. Here, we also report the \emph{train} statistics for the sake of completeness. Note that the train accuracies can be extremely high, due to overfitting to the small number of samples clients can possibly have in some scenarios. To further clarify, in Table~\ref{tb:scenario_3_sup} (extended version of Table~\ref{tb:scenario_3} on FEMNIST dataset) for the case of MLP, all the methods are run for $T=1000$ communication rounds because the learning process is slower compared to the case of CNN. As can be seen, after $T = 1000$ communication rounds, the network hardly reaches the same level of performance it achieves with only $T = 100$ rounds with CNN. The only exception here is {\ifca} \citep{NEURIPS2020_e32cc80b} which seems to be slower in convergence in this setting irrespective of the choice of local models. To make it reach a comparable performance for CNN case, we exceptionally allowed {\ifca} to run $T = 1500$ rounds ($15$ times more) in contrast to $T=100$ for the rest of the methods. This demonstrates the difference between the proposed approach ({\flt}) and {\ifca} in term of convergence speed. In this setting, {\flt} is at least $10$ times faster than {\ifca}. The converge graphs corresponding to Tables~\ref{tb:scenario_1_sup}, \ref{tb:scenario_2_sup}, \ref{tb:scenario_3_sup}, and \ref{tb:scenario_4_sup} are respectively illustrated in Figs.~\ref{fig:scenario_1_train_test}, \ref{fig:scenario_2_train_test}, \ref{fig:scenario_3_train_test} and \ref{fig:scenario_4_train_test}. The \emph{train} convergence graphs are plotted with dashed lines and with the same markers as the corresponding \emph{test} curves. Note the different range of communication rounds on the top horizontal axis of Fig.~\ref{fig:scenario_3_train_test} (for CNN on the right) associated with {\ifca}. On the results for Scenario 3, Fig.~\ref{fig:scenario_3_clusters} provides a more complete picture of the impact of imposing a predefined number of clusters in hierarchical clustering (\texttt{HC}) on the performance of {\flt}. As can be seen, the test accuracies slightly degrade compared to {\flt} using the full client relatedness matrix. Increasing the number of clusters seems to have a gradual downgrading impact. 

\begin{figure*}[t!]
\centering
\begin{subfigure}{0.49\textwidth}
	\centering
	\includegraphics[width=0.99\textwidth]{./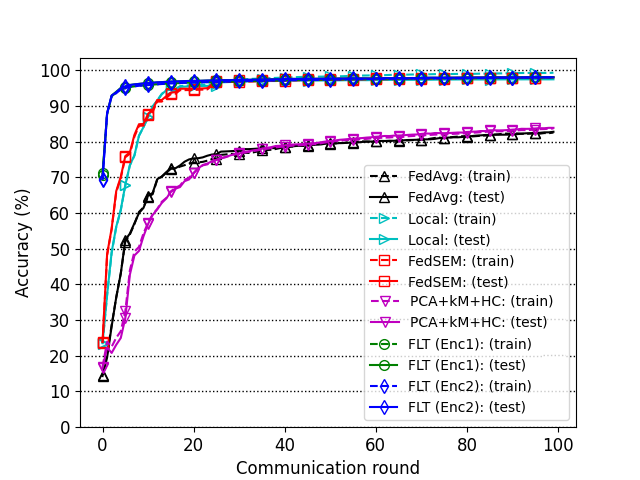}
	\vspace{-0.2cm}
	\caption{Train and test accuracies for MNIST.}
	\label{fig:scenario_1_train_test_mnist}
\end{subfigure}
\hfill
\begin{subfigure}[t!]{0.49\textwidth}
	\centering
	\includegraphics[width=0.99\textwidth]{./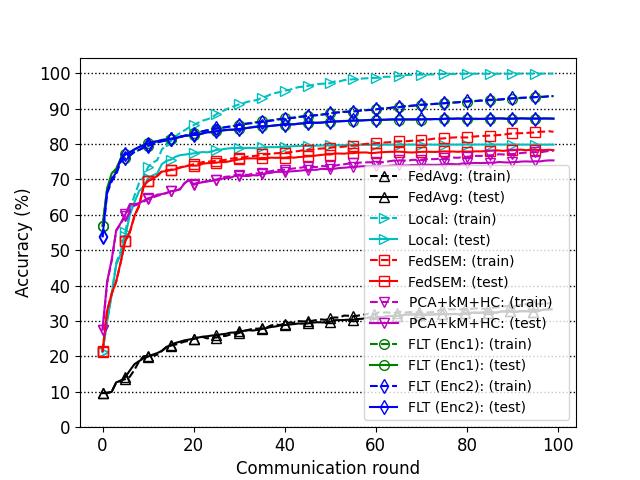}
	\vspace{-0.2cm}
	\caption{Train and test accuracies for CIFAR10.}
	\label{fig:scenario_1_train_test_cifar}
\end{subfigure}
\vspace{-0.2cm}
\caption{Train and test accuracies for Scenario 1, $C = 5$, $M = 100$.}
\label{fig:scenario_1_train_test}
\vspace{-0.3cm}
\end{figure*}
\begin{figure*}[t!]
\centering
\begin{subfigure}{0.49\textwidth}
	\centering
	\includegraphics[width=0.99\textwidth]{./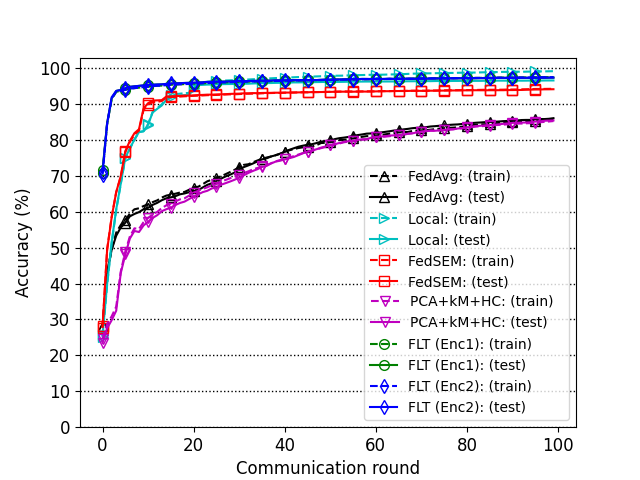}
	\vspace{-0.2cm}
	\caption{Train and test accuracies for MNIST.}
	\label{fig:scenario_2_train_test_mnist}
\end{subfigure}
\hfill
\begin{subfigure}[t!]{0.49\textwidth}
	\centering
	\includegraphics[width=0.99\textwidth]{./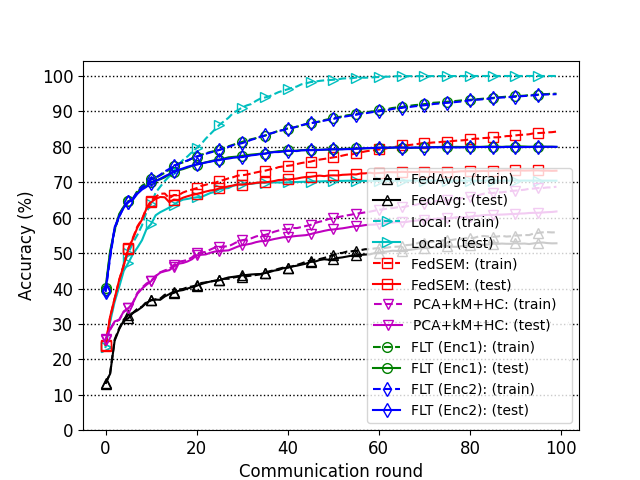}
	\vspace{-0.2cm}
	\caption{Train and test accuracies for CIFAR10.}
	\label{fig:scenario_2_train_test_cifar}
\end{subfigure}
\vspace{-0.2cm}
\caption{Train and test accuracies for Scenario 2, $C = 5$, $M = 100$.}
\label{fig:scenario_2_train_test}
\vspace{-0.3cm}
\end{figure*}
\begin{figure*}[t!]
\centering
\begin{subfigure}{0.49\textwidth}
	\centering
	\includegraphics[width=0.99\textwidth]{./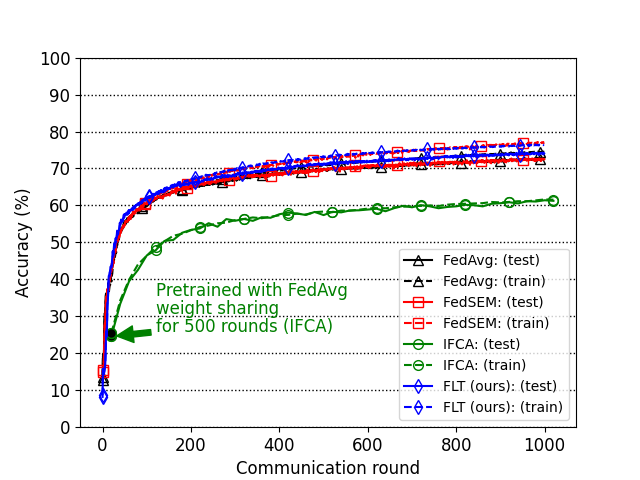}
	\vspace{-0.2cm}
	\caption{Train and test accuracies for MLP.}
	\label{fig:scenario_3_train_test_mlp}
\end{subfigure}
\hfill
\begin{subfigure}[t!]{0.49\textwidth}
	\centering
	\includegraphics[width=0.99\textwidth]{./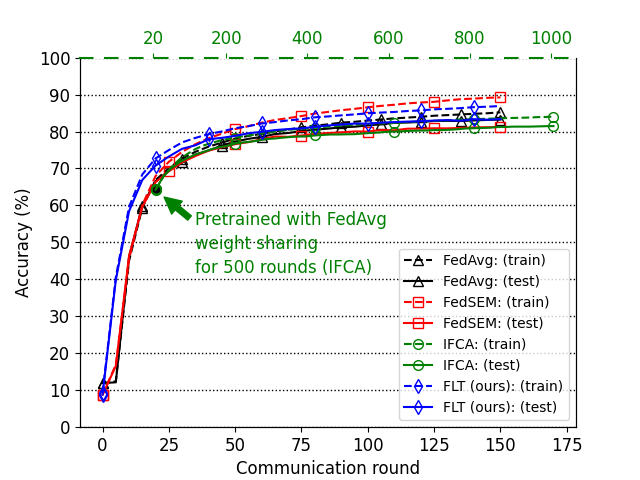}
	\vspace{-0.2cm}
	\caption{Train and test accuracies for CNN.}
	\label{fig:scenario_3_train_test_cnn}
\end{subfigure}
\vspace{-0.2cm}
\caption{Convergence graph of \emph{train} and \emph{test} accuracies for Scenario 3, FEMNIST, $M=200$. On the right for CNN, note the different range of communication rounds on the top horizontal axis associated with {\ifca}.}
\label{fig:scenario_3_train_test}
\vspace{-0.3cm}
\end{figure*}
\begin{figure*}[t!]
    \begin{subfigure}{.50\linewidth}
    	\centering
    	\includegraphics[width=0.90\textwidth]{./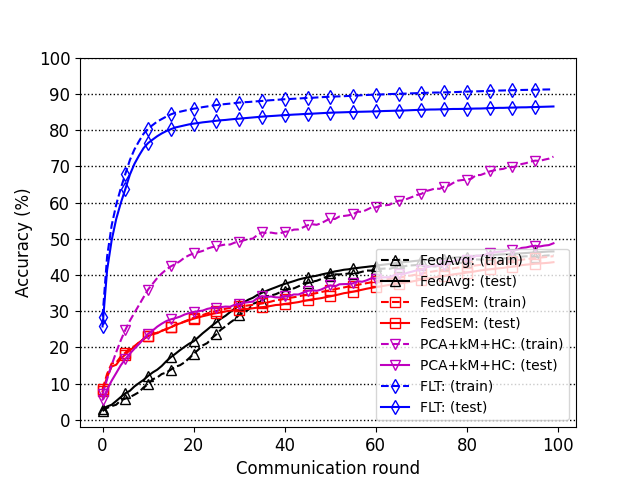}
    	\vspace{-0.2cm}
    	\caption{Train and test accuracies for MLP, $M=2400$.}
    	\label{fig:scenario_4_train_test_mlp}
    \end{subfigure}
    \begin{subfigure}{.50\linewidth}
    	\centering
    	\includegraphics[width=0.90\textwidth]{./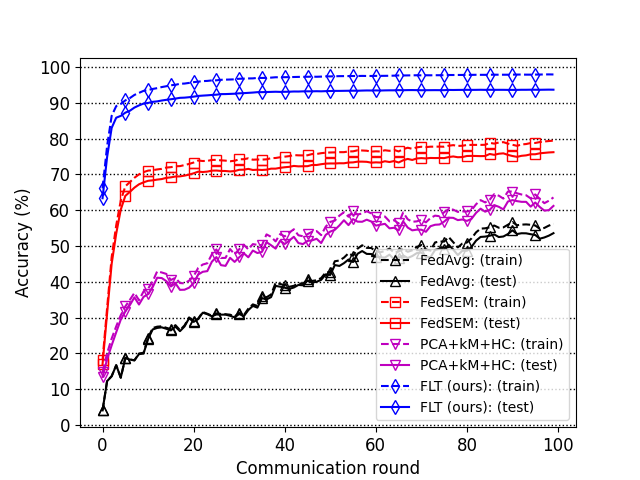}
    	\vspace{-0.2cm}
    	\caption{Train and test accuracies for CNN, $M=600$.}
    	\label{fig:scenario_4_train_test_cnn}
    \end{subfigure}
    \vspace{-0.2cm}
    \caption{Convergence graph of \emph{train} and \emph{test} accuracies for Scenario 4, Structured Non-IID FEMNIST, $C=10$.}
    \label{fig:scenario_4_train_test}
\end{figure*}
\begin{figure*}[t!]
\centering
\begin{subfigure}{0.49\textwidth}
	\centering
	\includegraphics[width=0.99\textwidth]{./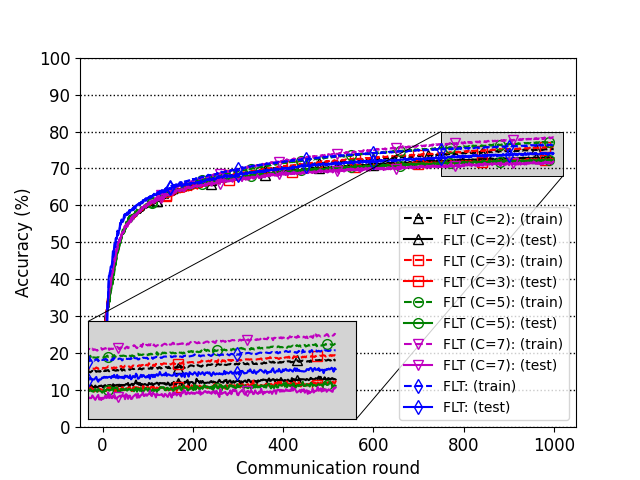}
	\vspace{-0.2cm}
	\caption{Train and test accuracies for MLP.}
	\label{fig:scenario_3_clusters_train_mlp}
\end{subfigure}
\hfill
\begin{subfigure}[t!]{0.49\textwidth}
	\centering
	\includegraphics[width=0.99\textwidth]{./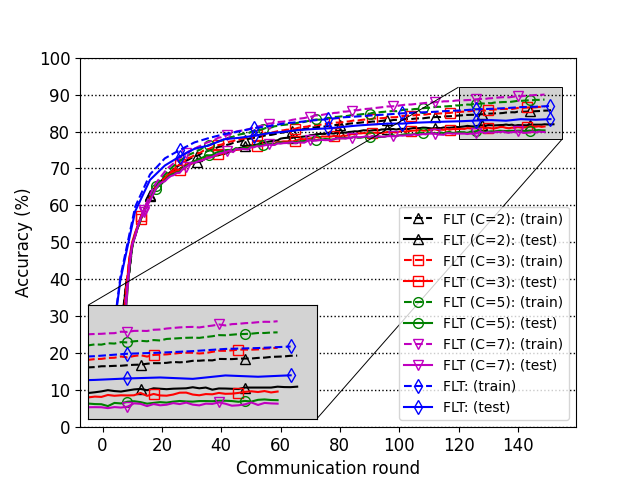}
	\vspace{-0.2cm}
	\caption{Train and test accuracies for CNN.}
	\label{fig:scenario_3_clusters_train_cnn}
\end{subfigure}
\vspace{-0.2cm}
\caption{Convergence graph of \emph{train} and \emph{test} accuracies for Scenario 3, FEMNIST, $M=200$. Comparing the impact of imposing different number of clusters $C$ for hierarchical clustering (\texttt{HC}) on the performance of {\flt}.}
\label{fig:scenario_3_clusters}
\vspace{-0.2cm}
\end{figure*}
%

%---------------------------------------------------
\section{Implementation Details}
\label{sec:implem}
\vspace{-0.0cm}
%--------------------------------------------------

%---------------------------------------------------
\subsection{Structured Non-IID FEMNIST Dataset}
\label{ssec:noniid_femnist}
\vspace{-0.0cm}
%--------------------------------------------------
%
\begin{algorithm}[t!]
	%\SetKwInput{Init}{Memory}
	\SetKwInput{Require}{Require}
	\SetAlgoLined
	\DontPrintSemicolon
	\SetNoFillComment
	%\Init{current best model $g^{*}$, current buffer $\mathcal{B}$}
	\Require{min number of samples per client $\alpha$, power exponent of the power law $\delta$, and cluster formation with 
    	$\mathcal{L} = \{\mathcal{L}_1, \cdots, \mathcal{L}_C\}$,
    	$\mathcal{C} = \{\mathcal{C}_1, \cdots, \mathcal{C}_C\}$. 
	}
    Sort the training data according to the labels assigned to clusters $\mathcal{D}^t = \{\mathcal{D}^t_{\mathcal{L}_1}, \cdots, \mathcal{D}^t_{\mathcal{L}_C}\}$\; 
	 \For{\textup{each cluster} $c \in [C]$}{
	    $\beta^*_c= \argmin_{\beta} (\sum_{m} \alpha + e^{\beta \, m^{\delta}} - |\mathcal{D}^t_{\mathcal{L}_c}|); \forall m \in [M_c]$, with $M_c := |\mathcal{C}_c| $ \;
	    Randomly shuffle samples in $\mathcal{D}^t_{\mathcal{L}_c}$ \;
	    \For{\textup{each client} $m \in [M_c]$ }{
	        Sequentially assign a batch of $\alpha + e^{ \beta^*_c \, m^{\delta} }$ data samples from $\mathcal{D}^t_{\mathcal{L}_c}$  \;
	    }
	}
	\caption{Structured Non-IID FEMNIST Sampler}\label{alg:sampling}
\end{algorithm}
As we explained in Section~\ref{sec:eval}, our newly introduced ``Structured Non-IID FEMNIST" is built upon the ``balanced'' subset of EMNIST dataset \citep{cohen2017emnist}. We impose two levels of non-IIDness: i) quantity skew: meaning that the clients will have a varying number of data samples; ii) structured label skew: where we predefined a set of $C$ clusters and clients in different clusters would have uncorrelated sets of labels. The sampling strategy leading to the Structured Non-IID FEMNIST is summarized in the form of pseudo-code in Algorithm~\ref{alg:sampling}. In the algorithm, $\mathcal{L}_c$ refers to the set of labels clients in cluster $c$ sample their data from. $\mathcal{C}_c$ denotes the client membership set of cluster $c$ containing the client IDs assigned to the cluster with $|\mathcal{C}_c| = M_c$ and $\sum_c M_c =  M$. As such, in this construct, we are grouping clients into a total of $C$ disjoint clusters and the members of each cluster would samples from specific set of labels in $\mathcal{L}_c$. The power law for quantity skew is assumed to follow $\alpha + e^{\beta \, m^\delta}$, where $\alpha$ is a constant denoting the minimum number of samples a client can have and $\delta$ is the power exponent constant. Obviously, any reasonable power law can straightforwardly be accommodated here. The algorithm requires, $\delta$ and $\alpha$, as well as the details of the predefined cluster formation and client membership encoded in $\mathcal{L}$ and $\mathcal{C}$. Firstly, the training data is sorted according to the labels per cluster $\mathcal{D}^t = \{\mathcal{D}^t_{\mathcal{L}_1}, \cdots, \mathcal{D}^t_{\mathcal{L}_C}\}$. Next, each cluster and associated clients are processed. Per cluster, depending on the membership size ($M_c := |\mathcal{C}_c|$) the appropriate $\beta$ is calculated, which will be the same for clusters of the same size. To ensure clients take their samples from almost all the labels allowed in that cluster (denoted by $\mathcal{L}_c$ for cluster $c \in [C]$), a random shuffling is applied to the data samples, and therefore to the associated labels. Finally, within each cluster, a batch of data samples of size $\alpha + e^{\beta^*_c \, m^{\delta}}$ (following the power low) is assigned to client $m$ in cluster $c$. More details and the implementation can be found in our GitHub repository shared earlier.

%---------------------------------------------------
\subsection{Machines}
\label{ssec:machines}
\vspace{-0.4cm}
%--------------------------------------------------
We simulated the federated learning settings on standard virtual machines (VMs) on Microsoft Azure cloud. Our server VM (NC6v3) includes 2 Intel(R) Xeon(R) E5-2690 CPUs (each has 6 vCPUs) and 2 NVidia V100 Tesla GPUs.

%---------------------------------------------------
\subsection{Software}
\label{ssec:software}
\vspace{-0.4cm}
%--------------------------------------------------
The full code and all the associated experiments are publicly available at: \url{https://github.com/hjraad/FLT/}.

%---------------------------------------------------
\subsection{Model architectures}
\label{ssec:model_arch}
\vspace{-0.4cm}
%--------------------------------------------------
For reader's reference, below is detailed model architectures used in our experimentation. The multi-layer perceptron (MLP) employed in local model training is summarized in Table~\ref{tb:mlp_arch}. We have used the CNN architecture proposed in the LEAF package \citep{caldas2018leaf}, as summarized in Table~\ref{tb:cnn_arch}. Finally, in Table~\ref{tb:enc_arch} we have summarized our main convolutional Autoencoder (ConvAE) which is either trained apriori, or will be fine-tuned on client data. After training or fine-tuning, the encoder section of the ConvAE is used for extracting embeddings which will in turn be used for the formation of client relatedness graph in Algorithm~\ref{alg:formcluster}. 

\begin{table*}[h]
\centering 
{\footnotesize
\begin{Verbatim}[samepage=true]
==========================================================================
Layer (type:depth-idx)    Specs
==========================================================================
Linear: 1-1               in_features=flatten(input), out_features=200
Dropout: 1-2              p=0.5
Linear: 1-3               in_features=200, out_features=num_classes
==========================================================================
Total params: 169,462
Trainable params: 169,462
Non-trainable params: 0
==========================================================================
\end{Verbatim}
}
\vspace{-0.5cm}
\caption{MLP architecture}
\label{tb:mlp_arch}
\end{table*}
\begin{table*}[h]
{\footnotesize 
\begin{Verbatim}[samepage=true]
==========================================================================
Layer (type:depth-idx)    Specs
==========================================================================
Conv2d: 1-1               channels=32, kernel_size=5, stride=1, padding=2
MaxPool2d: 1-2            kernel_size=2, stride=2, padding=0
Conv2d: 1-3               channels=63, kernel_size=5, stride=1, padding=2
Linear: 1-4               in_features=flatten(previous), out_features=2048
Linear: 1-5               in_features=2048, out_features=num_classes
==========================================================================
Total params: 3,459,582
Trainable params: 3,459,582
Non-trainable params: 0
==========================================================================
\end{Verbatim}
}
\vspace{-0.5cm}
\caption{CNN architecture}
\label{tb:cnn_arch}
\end{table*}
\begin{table*}[h]
\centering 
{\footnotesize
\begin{Verbatim}[samepage=true]
==========================================================================
Layer (type:depth-idx)    Specs
==========================================================================
Conv2d: 1-1               channels=16, kernel_size=3, stride=1, padding=1
Conv2d: 1-2               channels=4, kernel_size=3, stride=1, padding=1
MaxPool2d: 1-3            kernel_size=2, stride=2, padding=0
Linear: 1-4               in_features=196, out_features=128
Linear: 1-5               in_features=128, out_features=196
ConvTranspose2d: 1-6      channels=16, kernel_size=2, stride=2
ConvTranspose2d: 1-7      channels=16, kernel_size=2, stride=2
==========================================================================
Total params: 51,577
Trainable params: 51,577
Non-trainable params: 0
==========================================================================
\end{Verbatim}
}
\vspace{-0.5cm}
\caption{Encoder (ConvAE) architecture}
\label{tb:enc_arch}
\end{table*}

\end{document}